\newtheorem{theorem}{Theorem}
\newtheorem{assumption}{Assumption}
\newtheorem{corollary}{Corollary}
\newtheorem{lemma}{Lemma}
\newtheorem{remark}{Remark}
\title{Limited Memory Online Gradient Descent for Kernelized Pairwise Learning \\with Dynamic Averaging}
\author {
    Hilal AlQuabeh\textsuperscript{\rm1},
    William de Vazelhes\textsuperscript{\rm1},
    Bin Gu\textsuperscript{\rm1,2}\thanks{Corresponding author.}
}
\begin{document}

\maketitle

\begin{abstract}
Pairwise learning, an important domain within machine learning, addresses loss functions defined on pairs of training examples, including those in metric learning and AUC maximization. 
Acknowledging the quadratic growth in computation complexity accompanying pairwise loss as the sample size grows, researchers have turned to online gradient descent (OGD) methods for enhanced scalability.
Recently, an OGD algorithm emerged, employing gradient computation involving prior and most recent examples, a step that effectively reduces algorithmic complexity to $O(T)$, with $T$ being the number of received examples. This approach, however, confines itself to linear models while assuming the independence of example arrivals.
We introduce a lightweight OGD algorithm that does not require the independence of examples and generalizes to kernel pairwise learning.  Our algorithm builds the gradient based on a random example and a moving average representing the past data, which results in a sub-linear regret bound with a complexity of $O(T)$. Furthermore, through the integration of $O(\sqrt{T}{\log{T}})$ random Fourier features, the complexity of kernel calculations is effectively minimized.  
Several experiments with real-world datasets show that the proposed technique outperforms kernel and linear algorithms in offline and online scenarios. 
\end{abstract}

\section{Introduction}\label{sec:intro}
The concept of online pairwise learning has attracted significant attention due to its unique characteristics and potential applications. Unlike traditional machine learning, online pairwise learning delves into understanding connections between pairs of entities or data points, rather than isolating individual instances. This methodology holds particular relevance in domains where relative evaluations between entities hold substantial import, as seen in Area Under the ROC Curve (AUC) maximization (e.g. in unbalanced-data classification)~\citep{zhang2016pairwise}, metric learning (e.g. in image retrieval)~\citep{kulis2012metric}, and bipartite ranking~\cite{du2016online}. Given that the loss functions in these problems require pairs of examples, a distinctive analysis is needed when compared to other machine learning problems.

At the core of online pairwise learning lies the challenge of efficiently updating models in real-time. This challenge emerges from the cumulative influence of previous instances on the comprehension of new pairs. In this landscape, Online Gradient Descent (OGD) stands as a significant tool within the realm of online learning. OGD takes on a distinctive role within the sphere of online pairwise learning, entailing a nuanced examination of evolving loss gradients and the impact of prior examples with each incoming instance. This nuanced viewpoint results in a computational complexity translating to $O(T^2)$ of gradient computations.

OGD has garnered considerable attention in the field of online pairwise learning, leading to the development of various methodologies. These approaches include online buffer learning \citep{zhao2011online, kar13}, second-order statistic online learning \citep{gao2013one}, and methods rooted in solving the saddle point problem \citep{ying2016stochastic, reddi2016stochastic}, all of which have predominantly harnessed linear models. However, exploration into non-linear models within this domain has been somewhat limited, with kernelized learning receiving minimal attention \citep{ying2015online, du2016online}.

A notable avenue in the realm of pairwise learning is online buffer learning, as introduced by Zhao et al. \citep{zhao2011online}. This technique employs a finite buffer alongside reservoir sampling to mitigate time complexity, reducing it to $O(sT)$, where $s$ signifies the buffer size. By retaining a subset of data and ensuring uniform sampling within the buffer, this strategy effectively mitigates the computational load. Moreover, Yang et al. \citep{yang2021simple} achieved optimal generalization with a buffer size of $s=1$, all while assuming the sequential data to be iid instances (e.g., stochastic environments). This accomplishment marks a significant milestone in the study of pairwise learning.

\begin{table*}[t] 
 \centering
\begin{tabular}{llllccc}
\hline
\textbf{Algorithm} 
& \textbf{Problem} & \textbf{Model} & \textbf{Scheme} & \textbf{iid} & \textbf{Time Complexity} & \textbf{Space Complexity} \\ \hline

\citep{pmlr-v80-natole18a}
& AUC              & Linear       & Online         & Yes  & $O(T)$         & $O(d)$                      \\

\citep{ying2016stochastic}
& AUC              & Linear       & Online         & Yes  & $O(T)$         & $O(d)$                      \\

 \citep{zhao2011online} 
& AUC              & Linear         & Online            &  Yes & $O(sT)$     & $O(ds)$           \\
 \citep{gao2013one} 
& AUC              & Linear         & Online            &  Yes & $O(T)$     & $O(d^2)$           \\

\citep{gu2019scalable}             
& AUC          & Linear        & Offline           &  Yes   &       $O(T)$      & $O(d)$         \\
    \citep{yang2021simple}             
& General          & Linear        & Online           &  Yes  &        $O(T)$     & $O(d)$          \\

    \citep{lin2017online}             
& General          & kernel        & Online           &   No   &      $O(T^2)$      & $O(dT)$       \\

 \citep{kakkar2017sparse}             
& AUC          & Kernel        & Offline           &   Yes   &      $O(T\log{T})$    & $O(dT^2)$         \\
\citep{kar13}             
& General          & Linear        & Online           &   Yes   &      $O(sT)$    & $O(ds)$         \\
\hline
AOGD           
& General         & Kernel        & Online       & No    & $O(\frac{D}{d}T)$        & $O(D)$                     \\ \hline
\end{tabular}
 \caption{Recent pairwise learning algorithms ($T$ is the iteration number, $d$: the dimension, $D$ is random features, and $s$ is a buffer size), note the time complexity is w.r.t. gradients computations.}
 \label{table:comp}
\end{table*}

The prevailing frameworks in existing literature have predominantly concentrated on data that is linearly separable, disregarding the complexities linked with non-linear pairwise learning. Furthermore, the online buffer techniques introduced thus far have not adequately tackled the sensitivity of generalization to the non-iid data, particularly in adversarial environments, constraining their capacity to grasp the intricacies inherent in real-world scenarios.

Furthermore, the exploration of non-linear pairwise learning, particularly in the realm of kernel approximation, has been relatively limited. While non-linear methods offer enhanced expressive capability, the computational expense attributed to kernel calculation – scaling as $O(T^2)$ \citep{lin2017online, kakkar2017sparse} – poses hurdles to their practical applicability in terms of scalability and efficiency.

Transitioning to the examination of generalization bounds, prior research extensively investigated online pairwise gradient descent with buffers and linear models \citep{wang2012generalization, kar13}. These studies establish a bound of $O(1/s + 1/\sqrt{T})$ for this approach. However, it's imperative to acknowledge that this bound attains optimality primarily when the buffer size $s$ approximates $O(\sqrt{T})$. This aspect poses challenges in scenarios where a smaller buffer size is preferred.

Collectively, these vulnerabilities underscore the necessity for further research and advancement in the field of online pairwise learning. This effort should encompass addressing the limitations of iid frameworks, conducting a more comprehensive exploration of non-linear methods, and surmounting the computational obstacles tied to kernel computation. Our method expands the scope of online pairwise learning to accommodate nonlinear data by incorporating kernelization into the input space. We tackle the ramifications of non-iid data on regret by evaluating the gradient of both the moving average and a random sample drawn from the past history. Leveraging random Fourier features, we efficiently estimate the kernel while maintaining a sublinear error bound, thus achieving computational efficiency without compromising performance. Through the integration of kernelization, kernel approximation, and dynamic averaging, our approach transcends linear constraints, effectively addresses non-iid data challenges, and mitigates kernel complexity. As a result, we establish a robust online pairwise learning paradigm (as illustrated in Table \ref{table:comp}). Our principal contributions can be summarized as follows:
\begin{itemize}
    \item We introduce an online pairwise algorithm for non-linear models, characterized by the ability to handle non-iid data. In scenarios featuring a wide margin in the kernel space, our algorithm attains sublinear regret with a buffer size of $O(1)$.
    \item We confront the influence of non-iid data on regret and introduce a random Bernoulli sampling strategy to manage and enhance the regret rate.
    \item In the case of Gaussian kernel, we approximate the pairwise kernel function utilizing a mere $O(\sqrt{T}\log{T})$ features, a significant reduction compared to $O(T)$ features in literature, while keeping a sublinear regret bound.
    \item We validate the effectiveness of our proposed methodology across a range of real-world datasets and compare its performance against state-of-the-art methods for AUC maximization. 
\end{itemize}
The subsequent sections are structured as follows: Section 2 delineates the problem setting, Section 3 expounds on our proposed method, Section 4 furnishes the analysis of regret, Section 5 delves into related work, followed by Section 6 presents experimental results, and lastly, Section 7 concludes the paper.

\section{Problem Setting}
The notion of pairwise learning pertains to a specific subset $\mathcal{X} \subset \mathbb{R}^d$ and a label space $\mathcal{Y} \subset \mathbb{R}$. This concept branches into two categories, as discussed in more detail by \cite{alquabeh2022pairwise}. Our focus lies in analyzing pairwise loss functions from both branches, connecting the pairwise kernel associated with pairwise hypotheses to regular kernels. This link allows us to explore the characteristics of pairwise loss functions within the regular kernel framework. In our study, we examine an algorithmic process that learns from a set of instances denoted as ${z_i := (x_i, y_i) \in \mathcal{Z} := \mathcal{X} \times \mathcal{Y}}$, where the index $i$ ranges from $1$ to $T$, encompassing the total number of received examples. Assuming that the function $f$ belongs to a specific space denoted as $\mathcal{H}$, we observe that the pairwise loss function plays a crucial role in evaluating performance. This function is denoted as $\ell: \mathcal{H} \times \mathcal{Z}^2 \rightarrow \mathbb{R}+$, signifying that it takes pairs of examples from the combined domain $\mathcal{Z}^2$, and yields values within the positive real numbers $\mathbb{R}+$.

In online learning with pairwise losses, upon receiving a new data point $z_t$, a localized error is used as a performance measure \citep{zhang2016pairwise,boissier2016fast}. This error represents the pairing of the new data point with all preceding $t-1$ points. The properties of this localized error depend on the selected pairwise loss function, illustrated as:
\begin{equation}\label{eq:localloss}
{L}_t(w_{t-1}) = \frac{1}{t-1} \sum_{i=1}^{t-1} \ell(w_{t-1},z_t,z_i)
\end{equation}
where $w\in \mathcal{H}$ represents a linear function mapping, i.e. $f_w(x) = \langle w,x \rangle$.
To tackle the memory constraints, \cite{zhao2011online} introduced a buffer-based mechanism for local error computation denoted as $\bar{L}_t(w_{t-1})$, as explicated in Equation \ref{eq:oneregret}. At each iteration $t$, the buffer, designated as $B_t$, accommodates a finite set of historical-examples indices, and the size of this buffer is $|B_t|$ (or $s$ in literature).

\begin{equation}\label{eq:oneregret}
\bar{L}_t(w_{t-1})= \frac{1}{|B_t|}\sum_{i\in B_t}\ell(w_{t-1},z_t,z_i)
\end{equation}

This buffer has an important role in the learning journey, its contents updated at each step through diverse strategies that span from randomized methodologies such as reservoir sampling \citep{zhao2011online, kar13} to non-randomized approaches exemplified by FIFO \citep{yang2021simple}. However, despite the wide adoption of these sampling techniques, there exists a discernible research void in understanding their implications within the context of non-iid online examples. It's imperative to highlight that the issue of buffer size and its update methods becomes pronounced when dealing with adversarial environment scenarios. The choice of buffer size and sampling strategy can significantly impact the model's performance and ability to generalize in the case of non-iid data streams. 

In our pursuit of addressing the intricacies inherent in complex real-world data, our online pairwise approach takes into consideration the mapping of both the hypothesis and the data to a Reproducing Kernel Hilbert Space (RKHS), denoted as $\mathcal{H}$. The associated Mercer pairwise kernel function, denoted as $k:\mathcal{X}^4\mapsto \mathbb{R}$, adheres to the reproducing property $\langle k_{(x,x')}, g \rangle = g(x,x')$, where $x, x' \in \mathcal{X}^2$ and $g \in \mathcal{H}$. Notably, for scenarios involving pointwise hypotheses with pairwise losses, such as AUC loss, the kernel function is streamlined to $k:\mathcal{X}^2\mapsto \mathbb{R}$. The expanse of $\mathcal{H}$ encompasses a spectrum of linear combinations encompassing functional mappings ${k_{(x,x')}|(x,x')\in \mathcal{X}^2}$ and their limit points.

In our quest to circumvent the computational intricacies associated with kernelization in the online realm, we employ the resourceful apparatus of random Fourier features (RFF) as a pragmatic approximation for the Mercer kernel function. RFF introduces a lower-dimensional mapping $r(\cdot)$, serving as an approximation of the kernel function, denoted as $\bar{k}(\cdot)$. This approximation empowers us to execute computations through linear operations, culminating in a significant reduction in computational overhead. The domain defined by the novel kernel functions finds its place in the realm of $\bar{\mathcal{H}}$. It's worth mentioning that earlier investigations have explored the precision of random Fourier approximation within pointwise and offline settings. As we navigate the online landscape, the minimum number of random features necessary to ensure sublinear regret has been ascertained to be $O(T)$. Our method introduces a novel error bound for pairwise predicaments, hinging on the utilization of merely $O(\sqrt{T}\log{T})$ random features (elucidated in Section 5 for comprehensive details).

\subsection{Assumptions}
Prior to unveiling our principal theorems, we lay down a foundation comprising a collection of widely embraced assumptions pertaining to the attributes of both the loss function and kernels.
\begin{assumption}[Lipschitz Continuity]\label{ass:Lipschitz continuous}
Assume for any $z,z'\in\mathcal{Z}$, the loss function $\ell(\cdot,z,z')$ is G-Lipschitz continuous, i.e. $\forall w,w' \in \mathcal{H}$,
\begin{align}
    \nonumber |\ell(w,z,z')-\ell(w',z,z')|\leq G\left\|w-w'\right\|_2.
\end{align}
\end{assumption}

\begin{assumption}[M-smoothness ]\label{ass:smmoth}
Assume the gradient of the loss function $\ell(\cdot,z,z')$ is M-Lipschitz continuous, i.e. $\forall w,w' \in \mathcal{H}$ and for any $z,z'\in\mathcal{Z}$,
\begin{align}
    \nonumber \ell(w) \geq \ell(w') &+ \nabla \ell(w')^T(w-w')      \\\nonumber &+(2M)^{-1}\|\nabla \ell(w) - \nabla \ell(w')\|^2
\end{align}
\end{assumption}

\begin{assumption}[Convexity]\label{ass:Convexity} 
Assume for any $z,z'\in\mathcal{Z}$, the loss function $\ell(\cdot,z,z')$ is convex function, i.e. $\forall w,w' \in \mathcal{H}$,
\begin{align}
    \nonumber \ell(w,z,z') \geq \ell(w',z,z') + \nabla \ell(w',z,z')^T (w - w').
\end{align}
\end{assumption}

\begin{assumption}[Finite Kernel]\label{ass:Kernel}
Assume for any $\rho$-probability measure on $\mathcal{X}^2$ the positive kernel function $k:\mathcal{X}^2 \times \mathcal{X}^2 \rightarrow \mathbb{R}$ is $\rho$-integrable, i.e. for any $(x,x') \in \mathcal{X}^2$,
\begin{align}
    \nonumber \int\int_{\mathcal{X}^2} k((x,x'),(\hat{x},\hat{x}'))d\rho(\hat{x})d\rho(\hat{x}')< \infty.
\end{align}
\end{assumption}

\section{Methodology}
In light of the core problem we aim to address, namely the computational complexity arising from the necessity to accommodate all preceding examples in the local error computation, our proposed methodology centers around mitigating this challenge. We begin by suggesting the utilization of a moving average of previous examples, thus alleviating the burden of handling the complete historical dataset.

However, it's imperative to note that the simple average approach may lead to misleading outcomes, especially when the data exhibits high variance distributions. Recognizing this potential limitation, we introduce a corrective measure to enhance the reliability of the average-based gradient. This correction mechanism involves introducing a random point from the historical loss, enhancing the robustness of the gradient estimation, and ameliorating the impact of potential average imbalances. This comprehensive strategy effectively tackles the issues associated with handling the entire historical dataset, while also accounting for data distribution complexities that can skew outcomes, and most importantly not requiring the iid assumption. Another parallel research addressing the high variance in the buffer has been proposed recently \cite{alquabeh2023variance}, however with $O(s)$ complexity.

Indeed, the loss function we seek to minimize can be formulated as a linear combination of two distinct losses: one based on the moving average and the other on a randomly chosen example. However, it's important to recognize that the minimization of this combined loss doesn't inherently guarantee the minimization of the true local loss. This discrepancy arises because, at the optimal solution where the gradient is zero, the gradients of both components remain nonzero, depending upon the linear coefficient constant. 

To navigate this challenge, an alternating strategy to minimize both losses is studied. In essence, we proceed by iteratively taking steps using the average-based gradient and then employing the random gradient on the transition model derived from the previous step. This alternating optimization strategy accounts for the intricate balance between the two gradient components and fosters a dynamic convergence process. Moreover, the iterative method ensures that both the average-based and random-based gradients are appropriately accounted for in the optimization process. The loss minimization upon receiving new examples $z_t$ can be formulated using the average $\bar{z} = (\sum_{i=1}^{t-1}\frac{x_i}{t-1},y_{t-1}|{y_{t-1}\neq y_t})$ and the random sample $\hat{z}$ from $\{z_1,\dots,z_{t-1}\}$, as follow;
\begin{equation}
   \min_w \ell \left(w - \eta_t \nabla \ell(w,z_t,\bar{z}),z_t,\hat{z} \right) 
\end{equation}
where $\eta_t$ is the step size. As detailed in the subsequent section, the magnitude of the step taken in the second update is contingent upon both the variance of the data and the smoothness of the loss function. Interestingly, when the variance is negligible, the step taken based on the average gradient is deemed adequate, rendering the random step unnecessary. The objective function can give more sense if we use second-order Taylor expansion after denoting $\hat{\ell}(\cdot) = \ell(\cdot,z_t,\hat{z})$ and $ \bar{\ell}(\cdot) = \ell(\cdot,z_t,\bar{z})$ as follow,
\begin{align}\nonumber
  \ell ( w - \eta_t \nabla \ell(w,z_t,\bar{z}),z_t,\hat{z} )  & \approx  \hat{\ell}(w) - \eta_t \nabla \hat{\ell}(w)^T \nabla \bar{\ell}(w) \\
   + \eta_t^2 & \nabla \bar{\ell}(w)^T  \nabla^2 \hat{\ell}(w)  \nabla \bar{\ell}(w)
\end{align}
Hence, the process of minimizing the objective function involves addressing three integral components. First, there is the endeavor to minimize the loss associated with randomly selected examples. Then, leveraging the convex nature of the loss function and the positive definiteness of its Hessian, the third term—strictly non-negative in nature—ensures that the gradient of the average-based loss is minimized. Of paramount significance is the role of the second term, which guarantees that the gradients of losses, both average-based and random-based, exhibit a maximum inner product. This intricate interplay is pivotal in correcting the moving average gradient, particularly in scenarios involving highly skewed distributions.

In order to handle the intricate patterns of real-world data, our approach to online pairwise learning involves a fundamental assumption: both the hypotheses and the data are transformed into a specialized space known as a Hilbert space, denoted as $\mathcal{H}$. This space is equipped with a specific type of kernel called a Mercer kernel. Essentially, this kernel is a mathematical function that captures the relationships between data points. This Hilbert space has two key attributes: first, it can reproduce the values of the kernel function, and second, it encompasses various combinations of these kernel functions along with their complete structures.

When we apply traditional buffer-based pairwise algorithms in an online setting, the process of kernelization comes with a complexity that grows as $O(T^2s)$, where $s$ represents the size of the buffer—a rather unwieldy task for larger problems. To address this challenge, we introduce a method that approximates the kernel function using a simpler approach. We assume the existence of a lower-dimensional mapping, represented as $r:\mathcal{X}^2\mapsto R^D$. This mapping enables us to simplify the kernel function's calculations, significantly reducing the complexity of the process. This simplified approach is achieved using what are known as random Fourier features, which are derived from the mathematical Fourier transform. Although these techniques have been studied in previous works, our focus is on adapting them to the online setting, and we establish that as few as $O(\sqrt{T}\log{T})$ random features are enough to achieve effective results.

Our method is introduced in algorithm \ref{alg:FPOGD} for general pairwise learning, and it includes the Gaussian kernel approximation using random Fourier features. In the upcoming section, we delve into the performance of this online algorithm and dissect its regret into two distinct components: the first is a direct result of learning a model from an online stream of examples, and the second arises from the approximation of the kernel function.

\begin{algorithm}[t]
	\caption{Average Online Gradient Descent (AOGD)}\label{alg:FPOGD}
	\begin{algorithmic}[1]
		\REQUIRE Random feature size $D$, initial solution $w_1\in \bar{\mathcal{H}}$, initial average and random example $\bar{z} = \hat{z} \in R^{D}$, kernel function $k$ and its Fourier transform $P(u)$, probability $p$, step sizes $\eta_t,\gamma_t$ and $\lambda$.
  		\STATE Sample Fourier feature $\{u_i\}_{i=1}^{D/2}\sim P(u_i)$
		\FOR {$t = 2,\dots,T$}
		\STATE Receive a new example $z_t=(x_t,y_t) \in \mathcal{Z}$
		\STATE Map to the new space:
		 \[z_t =( \frac{2}{\sqrt{D}}[\sin(u_i^Tx_t),\cos(u_i^Tx_t)]_{i=1}^{D/2},y_t)\]
		\STATE Suffer loss $\bar{\ell}(w_{t-1},z_t,\bar{z})$
		\STATE Update model $w'_t = w_{t-1} - \eta_t \nabla\bar{\ell}(w_{t-1},z_t,\bar{z})$
		\STATE Suffer loss $\hat{\ell}(w'_t,z_t,\hat{z})$
  		\STATE Update model $w_t = w'_t - \gamma_t \nabla\hat{\ell}(w'_t,z_t,\hat{z})$
		\STATE Update the moving average and the random example,
          \begin{align*}
             &  \bar{z} \leftarrow \frac{z * (t-1) + z_t}{t}\\
             &  \hat{z} \leftarrow z_t \text{ if Bernoulli(p) = 1}
             \end{align*}		\ENDFOR
             \RETURN $w_T$
	\end{algorithmic}
\end{algorithm}

\section{Regret Analysis}
The regret incurred by the online algorithm relative to the optimal hypothesis in the original space $\mathcal{H}$, denoted as $w^* = \arg\min_{w\in \mathcal{H}} \sum_{t=2}^T L_t(w)$, when applied to a sequence of $T$ examples, is formally expressed as:
\begin{align}\label{eq:reg}
   \mathcal{R}_{w^*,T} = \sum_{t=2}^T L_t(w_{t-1}) - \sum_{t=2}^T L_t(w^*),
\end{align}

In the context of this equation, the local all-pairs loss $L_t(\cdot)$ is defined according to Equation \ref{eq:localloss}.
To delve deeper, we can decompose the regret represented in Equation \ref{eq:reg} by introducing the best-in-class hypothesis within the approximated space $\bar{\mathcal{H}}$, denoted as $\bar{w}^* = \arg\min_{w\in \bar{\mathcal{H}}} \sum_{t=2}^T L_t(w)$. This decomposition takes the form:
\begin{align}\label{eq:regret}\nonumber
\mathcal{R}_{w^*,T} =& \underbrace{\sum_{t=2}^T L_t(w_{t-1}) - \sum_{t=2}^T L_t(\bar{w}^*)}_{T_1}\\
&+ \underbrace{\sum_{t=2}^T L_t(\bar{w}^*)  - \sum_{t=2}^T L_t(w^*)}_{T_2}
\end{align}

In our analysis, we provide a bound for $T_1$ in Theorem \ref{th:Allpairs}, followed by an error bound for $T_2$ in Theorem \ref{regret_2}. By combining these bounds, we derive the regret bound for Algorithm \ref{alg:FPOGD} as follows:

\begin{theorem}
Let $\{z_t\in \mathcal{Z}\}_{t=1}^T$ be sequentially accessed by Algorithm \ref{alg:FPOGD}. Let $D$ denote the number of random Fourier features in the kernel mapping from the original space $\mathcal{X} \subset \mathbb{R}^d$. Let $\eta$ be first step size, $\gamma=O(\Gamma M_t \eta)$ the second step size, and $G$ the Lipschitz constant. Then the regret bound compared to $w^* = \arg\min_{w\in \mathcal{H}} \sum_{t=2}^T L_t(w)$ is bounded with a probability of at least $1-2^8\left(\frac{\sigma Diam(\mathcal{X})}{\epsilon}\right)\exp(-D\epsilon^2/(4d+8))$, as follows:
\begin{align}
\mathcal{R}_{w^,T} &\leq G T \|w^*\|_1 \epsilon  + \frac{ \| \bar{w}^*\|^2}{2\eta} + 2 \gamma G^2T 
\end{align}
where, $\epsilon$ denotes the kernel approximation error, $\sigma$ signifies the kernel width, $\Gamma \geq tr(cov[z_t]) $, and $M_t:M(w_t)$ stands for the smoothness parameter of $\ell$ in the neighborhood of $w_t$.
\end{theorem}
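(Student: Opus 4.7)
The plan is to adopt the decomposition $\mathcal{R}_{w^*,T}=T_1+T_2$ already set up in (\ref{eq:regret}) and bound the two pieces independently: $T_2$ measures how far the approximated RKHS $\bar{\mathcal{H}}$ sits from the true RKHS $\mathcal{H}$ and absorbs the high-probability factor, while $T_1$ is an in-class online-convex-optimization bound on the iterates produced by Algorithm~\ref{alg:FPOGD}. The two contributions are then added to recover the claimed form $G T \|w^*\|_1 \epsilon + \|\bar{w}^*\|^2/(2\eta) + 2\gamma G^2 T$.

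For $T_2$ I would invoke a Rahimi--Recht style uniform kernel bound: with $D$ random Fourier features drawn from $P(u)$, $\sup_{x,x',\hat{x},\hat{x}' \in \mathcal{X}}|k-\bar{k}|\le\epsilon$ with probability at least $1-2^8(\sigma\,\mathrm{Diam}(\mathcal{X})/\epsilon)\exp(-D\epsilon^2/(4d+8))$, which combines Assumption~\ref{ass:Kernel} with a Hoeffding argument on an $\epsilon$-cover of $\mathcal{X}^2$. Using the Mercer representation $w^*=\sum_i\alpha_i k_{z_i}$ so that $\|w^*\|_1=\sum_i|\alpha_i|$, together with Lipschitz continuity (Assumption~\ref{ass:Lipschitz continuous}), each per-round gap $L_t(\bar{w}^*)-L_t(w^*)$ is controlled by $G\|w^*\|_1\epsilon$, and summing over $t$ yields $T_2\le GT\|w^*\|_1\epsilon$. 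For $T_1$, I would run the standard OGD potential argument separately on each of the two updates of Algorithm~\ref{alg:FPOGD}: convexity (Assumption~\ref{ass:Convexity}) together with the descent identity $\|w'_t-\bar{w}^*\|^2=\|w_{t-1}-\bar{w}^*\|^2-2\eta_t\langle\nabla\bar{\ell}(w_{t-1}),w_{t-1}-\bar{w}^*\rangle+\eta_t^2\|\nabla\bar{\ell}(w_{t-1})\|^2$ gives the per-step inequality $\bar{\ell}(w_{t-1})-\bar{\ell}(\bar{w}^*)\le \tfrac{1}{2\eta}(\|w_{t-1}-\bar{w}^*\|^2-\|w'_t-\bar{w}^*\|^2)+\tfrac{\eta}{2}G^2$, and an analogous one for the second update with step $\gamma$. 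Telescoping, the positional piece collapses to $\|\bar{w}^*\|^2/(2\eta)$ and the gradient-norm residuals combine into the $2\gamma G^2 T$ term once the relation $\gamma=O(\Gamma M_t\eta)$ is used to dominate the $\eta G^2 T/2$ contribution of the first update.

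The principal obstacle is bridging the surrogate losses $\bar{\ell}(\cdot,z_t,\bar{z})$ and $\hat{\ell}(\cdot,z_t,\hat{z})$ appearing in the telescoping above back to the true local loss $L_t$ on which the regret is defined. For $\hat{\ell}$, the Bernoulli refresh of $\hat{z}$ makes $\hat{\ell}(w,z_t,\hat{z})$ an unbiased estimator of $L_t(w)$, so taking expectations transfers the bound directly. For $\bar{\ell}$, I would expand $\ell(w,z_t,z_i)$ to second order around $z_i=\bar{z}$ using $M_t$-smoothness (Assumption~\ref{ass:smmoth}); the first-order term vanishes by the definition of $\bar{z}$ as the sample mean, and the quadratic residual is bounded by $M_t\,\mathrm{tr}(\mathrm{cov}[z_t])\le M_t\Gamma$. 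This is precisely the quantity that the corrective second gradient step of size $\gamma=O(\Gamma M_t\eta)$ is designed to cancel, so the bias is absorbed into the $2\gamma G^2 T$ contribution rather than appearing as a separate variance term, which is what lets the argument work without the iid assumption. Putting the $T_1$ and $T_2$ bounds together and taking a union with the RFF probability guarantee yields the stated theorem.
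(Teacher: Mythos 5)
Your overall architecture matches the paper's: the same decomposition $\mathcal{R}_{w^*,T}=T_1+T_2$, the same treatment of $T_2$ (uniform Rahimi--Recht bound on an $\epsilon$-net, representer expansion of $w^*$, Lipschitz continuity, giving $GT\|w^*\|_1\epsilon$), and the same key idea for bridging $\bar{L}_t$ to $L_t$, namely a second-order expansion in $z$ around the sample mean whose residual is bounded by $\tfrac{1}{2}M_t\,\mathrm{tr}(\mathrm{cov}[z])\le\tfrac{1}{2}M_t\Gamma$ (this is exactly the paper's Lemma on Jensen's gap). However, your $T_1$ argument has two genuine gaps. First, running the OGD potential argument \emph{separately} on the two updates with step sizes $\eta$ and $\gamma$ does not telescope to $\|\bar{w}^*\|^2/(2\eta)$: the positional terms appear as $\tfrac{1}{2\eta}\left(\|w_{t-1}-\bar{w}^*\|^2-\|w'_t-\bar{w}^*\|^2\right)+\tfrac{1}{2\gamma}\left(\|w'_t-\bar{w}^*\|^2-\|w_{t}-\bar{w}^*\|^2\right)$, and the intermediate $\|w'_t-\bar{w}^*\|^2$ terms do not cancel because the coefficients differ. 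The paper avoids this by writing both updates as a single step $w_t=w_{t-1}-\eta\left(\bar{v}_t+\tfrac{\gamma}{\eta}\hat{v}_t\right)$, applying convexity only to the $\bar{\ell}$ part, and carrying the leftover cross term $-\tfrac{\gamma}{\eta}\hat{v}_t^T(w_{t-1}-\bar{w}^*)$ forward; that term is then controlled deterministically via the identity $w_{t-1}=w'_t+\eta\nabla\bar{\ell}(w_{t-1})$, smoothness, and Cauchy--Schwarz, and is finally cancelled against the accumulated Jensen gap $\sum_t\tfrac{M_t}{2}\Gamma$ by the choice $\gamma=O(\Gamma M_t\eta)$.

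Second, your claim that the Bernoulli refresh makes $\hat{\ell}(w,z_t,\hat{z})$ an unbiased estimator of $L_t(w)$, so that ``taking expectations transfers the bound directly,'' is both incorrect and contrary to the point of the construction. A Bernoulli$(p)$ replacement with fixed $p$ yields a sample geometrically biased toward recent examples, not a uniform draw from $\{z_1,\dots,z_{t-1}\}$; and even with uniform sampling, $w_{t-1}$ depends on the past stream, so the expectation argument would require exactly the model--buffer decoupling (and effectively the iid-type assumptions) that the paper is explicitly designed to dispense with. In the paper's proof the random example is never used as an estimator of $L_t$ at all: the bound on the $\hat{v}_t$ term holds ``for any random example $\hat{z}$,'' which is what makes the result assumption-free with respect to the sampling distribution. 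Relatedly, your claim that $\gamma=O(\Gamma M_t\eta)$ lets $2\gamma G^2T$ dominate the $\tfrac{\eta}{2}G^2T$ residue of the first update only holds when $\Gamma M_t=\Omega(1)$; the paper's combined-step algebra is arranged so that no separate $\eta G^2T$ term arises in the first place.
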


\begin{remark}
Selecting $\eta=\frac{1}{\sqrt{T}}$ and $\epsilon=\frac{1}{\sqrt{T}}$ results in a sublinear bound, which is optimal. Notably, $D = O(T)$ in general, yet it can be as low as $\sqrt{T}\log{T}$ for specific kernels. Furthermore, since $\gamma=O(\Gamma M_t \eta)$, the regret depends on the variance of data and the smoothness of the loss.
\end{remark}
\subsection{Regret in the Approximated Space $\bar{\mathcal{H}}$}

Prior approaches (e.g., \citep{kar13}, \citep{zhao2011online}) aimed to attain uniform convergence of buffer loss often relied on the assumption of iid examples. These methods also require maintaining uniform samples from historical data through a Bernoulli process with diminishing probabilities, ultimately approaching zero. Moreover, achieving such uniform convergence necessitated a disentanglement between the model and the buffer to ensure the validity of expectation. For instance, while \cite{zhao2011online} employed Reservoir sampling to enforce an $iid$ buffer, they overlooked the interplay between the model and the data. Similarly, \citep{kar13} utilized modified reservoir sampling and Symmetrization of Expectations to address these challenges. However, their approach demanded a buffer size of $O(\sqrt{T})$, which proves impractical for larger-scale scenarios. In contrast, \cite{yang2021simple} accomplished sublinear regret with a buffer size of one, focusing on local loss. Nevertheless, their assumption of a fully $iid$ training set does not align with real-world online situations. Our forthcoming theorem obviates the necessity for the $iid$ assumption by harnessing both moving averages and individual random points.

As previously explained, in each iteration, a single moving average is employed to ascertain the gradient direction, which establishes a connection to the all-pairs loss without relying on the $iid$ assumption. This is then followed by a step involving the gradient of a random example. Notably, this sample should not be uniformly chosen from historical data. If we denote the moving average based loss as $\bar{L}_t(w_{t-1})$, then using the convexity of $\ell(\cdot)$ and the linearity of models in $\bar{\mathcal{H}}$, we have $\bar{L}_t(w_{t-1})\leq  L(w_{t-1})$ or $\bar{L}_t(w_{t-1}) =  L(w_{t-1}) - g(t)$ where $g(t)\geq 0$ is Jensen's gap.
 The gap is first bounded in lemma \ref{lemma:Jensen} using the smoothness constant and the variance of the data.

\begin{lemma}\label{lemma:Jensen}
    Given that the data $\{z_1,\dots,z_t\}$ is linearly separable and have bounded total variance, i.e. $\Gamma \geq tr(cov[z_t]) $ and a loss function with $M_t:M(w_t)$ smoothness parameters, the Jensen's gap can be bounded as follow,
      \begin{align}
      g(t)  \leq \frac{1}{2}  \Gamma M_t    \end{align}
\end{lemma}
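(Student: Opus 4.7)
The plan is to unfold Jensen's gap into an explicit arithmetic-mean-minus-value-at-mean expression, invoke $M_t$-smoothness to pass from that expression to a quadratic in the deviations $z_i - \bar{z}$, and then recognize the resulting average as the empirical variance, which is controlled by $\Gamma$ by hypothesis.

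Concretely, I would first write
\begin{equation*}
g(t) \;=\; \frac{1}{t-1}\sum_{i=1}^{t-1}\ell(w_{t-1},z_t,z_i) \;-\; \ell\bigl(w_{t-1},z_t,\bar{z}\bigr),
\end{equation*}
which is non-negative by convexity of $\ell$ in its third slot (a direct consequence of Assumption~\ref{ass:Convexity} composed with the linear map $z\mapsto\langle w,z\rangle$ induced by the linear hypotheses in $\bar{\mathcal{H}}$). Next I would apply the standard smoothness upper bound viewed as a function of the data argument, centered at $\bar{z}$:
\begin{equation*}
\ell(w_{t-1},z_t,z_i) \;\leq\; \ell(w_{t-1},z_t,\bar{z}) \;+\; \nabla_{\!z}\ell(w_{t-1},z_t,\bar{z})^T(z_i-\bar{z}) \;+\; \tfrac{M_t}{2}\|z_i-\bar{z}\|^2.
\end{equation*}
Averaging this over $i=1,\dots,t-1$ annihilates the linear term (since $\bar{z}$ is exactly the sample mean), leaving
\begin{equation*}
g(t) \;\leq\; \frac{M_t}{2(t-1)}\sum_{i=1}^{t-1}\|z_i-\bar{z}\|^2 \;=\; \frac{M_t}{2}\,\mathrm{tr}\bigl(\widehat{\mathrm{cov}}\bigr) \;\leq\; \frac{M_t\,\Gamma}{2},
\end{equation*}
which is the claimed bound.

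The main technical obstacle is that Assumption~\ref{ass:smmoth} is stated as smoothness in the model parameter $w$, whereas the argument above requires smoothness in the data slot $z_i$. For linear hypotheses $f_w(x)=\langle w,x\rangle$, the loss factors as $\ell(w,z_t,z_i)=\phi_{z_t}(\langle w,x_t-x_i\rangle)$ for some convex univariate $\phi_{z_t}$, so its Hessian in $z_i$ equals $\phi''(\cdot)\,ww^T$ and the induced smoothness constant in the data is the original $M$ scaled by $\|w_t\|^2$. This neighborhood-dependent, $w_t$-adaptive quantity is precisely what the statement labels $M_t:=M(w_t)$; once this identification is made explicit, the smoothness-to-variance chain above closes the proof, and the linear-separability hypothesis enters only implicitly through boundedness of the iterate $w_t$ needed to keep $M_t$ finite.
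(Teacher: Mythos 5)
Your proposal is correct and follows essentially the same route as the paper's own proof: both center the descent-lemma (upper-bound) form of $M_t$-smoothness in the data slot at the sample mean $\bar{z}$, observe that the linear term vanishes upon averaging over the history, and identify the remaining quadratic average with $\mathrm{tr}(\mathrm{cov}[z])\leq\Gamma$. Your closing remark making explicit how smoothness in $w$ transfers to smoothness in $z$ (with the constant scaling by $\|w_t\|^2$ under linear hypotheses) is a slightly more careful justification of a step the paper only asserts.
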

 Note that since the model is linear in the space $\bar{\mathcal{H}}$, the smoothness assumption holds for loss gradient w.r.t. $z$. Therefore we can provide the following bound in theorem \ref{th:Allpairs}.
 
\begin{theorem}\label{th:Allpairs}
Assume that assumptions \ref{ass:Lipschitz continuous},\ref{ass:smmoth}, and \ref{ass:Convexity} hold. Let $[w_t]_{i=1}^T$ be the sequence of models returned by running the algorithm \ref{alg:FPOGD} for $T$ times using the online sequence of data. Then if $\bar{w}^* =\arg\min_{w \in \bar{\mathcal{H}}} \sum_{t=2}^T L_t(w)$, we have,
   \begin{align}\label{allpairsregret}
  \sum_{t=2}^T L_t(w_{t-1})  &\leq  \sum_{t=2}^T L_t(\bar{w}^*)
  + \frac{ \| \bar{w}^*\|^2}{2\eta} + 2 \gamma G^2T  
\end{align} 
Note that $\Gamma \geq tr(cov[z_t])$ where $cov[z_t]$ is the covariance of the received examples, and $M_t$ is the smoothness parameter of the loss function.
\end{theorem}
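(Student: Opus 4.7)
The plan is to reduce the all-pairs regret to a moving-average OGD regret via Lemma~\ref{lemma:Jensen}, run standard OGD telescoping on the first update $w_{t-1}\mapsto w'_t$, and then absorb the second update $w'_t\mapsto w_t$ as a Lipschitz perturbation that contributes only $O(\gamma G^2)$ per round. Writing $\bar L_t(w)=\ell(w,z_t,\bar z)$, convexity of $\ell$ in its third slot together with Jensen's inequality gives $\bar L_t(w)\le L_t(w)$ for every $w$, so in particular $L_t(\bar w^*)\ge \bar L_t(\bar w^*)$; in the reverse direction, Lemma~\ref{lemma:Jensen} bounds the Jensen gap $g(t)=L_t(w_{t-1})-\bar L_t(w_{t-1})$ by $\tfrac12\Gamma M_t$. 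Summing,
\begin{align*}
 \sum_{t=2}^T [L_t(w_{t-1})-L_t(\bar w^*)] \le \sum_{t=2}^T [\bar L_t(w_{t-1})-\bar L_t(\bar w^*)] + \tfrac{1}{2}\Gamma\sum_{t=2}^T M_t,
\end{align*}
so it remains to control the first sum by $\|\bar w^*\|^2/(2\eta)$ plus a residue that merges with $\tfrac12\Gamma\sum_t M_t$ into the single term $2\gamma G^2T$ via the step-size relation $\gamma=O(\Gamma M_t\eta)$.

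For the moving-average regret, the standard OGD identity applied to $w'_t=w_{t-1}-\eta\nabla\bar\ell(w_{t-1},z_t,\bar z)$ together with the convexity of $\bar\ell$ yields
\begin{align*}
 \bar L_t(w_{t-1})-\bar L_t(\bar w^*) \le \frac{\|w_{t-1}-\bar w^*\|^2-\|w'_t-\bar w^*\|^2}{2\eta}+\frac{\eta G^2}{2}.
\end{align*}
If $\|w'_t-\bar w^*\|^2$ could be replaced by $\|w_t-\bar w^*\|^2$, the sum would telescope to $\|w_1-\bar w^*\|^2/(2\eta)=\|\bar w^*\|^2/(2\eta)$ (taking $w_1=0$), exactly the leading term in the theorem. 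The gap between the two is controlled by $\|w'_t-w_t\|=\gamma\|\nabla\hat\ell(w'_t,z_t,\hat z)\|\le \gamma G$, but naively expanding the squared distance produces a cross term of order $\gamma G\|w_t-\bar w^*\|$ that does not telescope on its own.

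The crux of the argument, and the main obstacle, is precisely to dispose of that cross term. My plan is to bypass the squared-distance expansion for the second update and instead treat it as a Lipschitz perturbation of the regret itself: by $G$-Lipschitz continuity of $\bar\ell$,
\begin{align*}
  |\bar L_t(w_t)-\bar L_t(w'_t)|\le G\,\|w_t-w'_t\|\le \gamma G^2,
\end{align*}
so carrying $w_t$ rather than $w'_t$ as the potential across rounds costs at most $\gamma G^2$ per step, i.e.\ $\gamma G^2T$ in total, which is iterate-radius free. Combining this with the residual $\tfrac{\eta G^2}{2}T$ from the OGD quadratic term and the Jensen-gap sum $\tfrac12\Gamma\sum_t M_t$ from the first step, and invoking $\gamma=O(\Gamma M_t\eta)$ to rebalance them, consolidates all three $O(\gamma)$-scale contributions into the single correction $2\gamma G^2T$ appearing in the statement; the remaining algebra is routine bookkeeping around this decomposition.
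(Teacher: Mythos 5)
There is a genuine gap, and it sits exactly where you located ``the crux of the argument.'' Your proposed fix --- replacing the squared-distance bookkeeping for the second update by the Lipschitz perturbation $|\bar L_t(w_t)-\bar L_t(w'_t)|\le \gamma G^2$ --- conflates a perturbation of \emph{loss values} with a perturbation of the \emph{potential}. The regret never evaluates $\bar L_t$ at $w_t$ or $w'_t$ (it evaluates it at $w_{t-1}$), so that inequality does not enter the telescoping at all. What actually has to be controlled is the change of the Lyapunov term across the second update,
\begin{align*}
\frac{\|w_t-\bar w^*\|^2-\|w'_t-\bar w^*\|^2}{2\eta}
=\frac{-2\gamma\,\hat v_t^{\top}(w'_t-\bar w^*)+\gamma^2\|\hat v_t\|^2}{2\eta},
\end{align*}
whose dominant part is $\tfrac{\gamma}{\eta}\,|\hat v_t^{\top}(w'_t-\bar w^*)|\le \tfrac{\gamma G}{\eta}\|w'_t-\bar w^*\|$: it is first order in the displacement, carries a factor $1/\eta$, and depends on the iterate radius. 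It is not $O(\gamma G^2)$, and no Lipschitz argument on $\bar\ell$ makes it so. This is precisely the cross term $\tfrac{\gamma}{\eta}\hat v_t^{\top}(w_{t-1}-\bar w^*)$ that the paper's Lemma on the buffer loss carries forward explicitly rather than discarding.

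The omission is fatal to the rest of your plan, because that cross term is not a nuisance to be absorbed --- it is the resource that cancels the Jensen gap. After the reduction via Lemma~\ref{lemma:Jensen} you are left with an additive $\tfrac{\Gamma}{2}\sum_{t}M_t$, which is $\Theta(T)$ for constant $\Gamma, M_t$; no choice of $\gamma=O(\Gamma M_t\eta)$ can ``rebalance'' a positive $\Theta(T)$ quantity into the $O(\gamma G^2 T)=O(\sqrt{T})$ correction of the theorem. The paper's proof instead substitutes $w_{t-1}=w'_t+\eta\nabla\bar\ell(w_{t-1},z_t,\bar z)$ into the cross term, applies the smoothness inequality of Assumption~\ref{ass:smmoth} to extract a negative $-\tfrac{\gamma}{2\eta M}\|\nabla\hat\ell(w'_t,z_t,\hat z)\|^2$, bounds the remaining inner product of the two gradients by Cauchy--Schwarz (contributing the extra $\gamma G^2T$ that brings the constant to $2$), and then chooses $\gamma=O(\Gamma M_t\eta)$ so that the negative term cancels $\sum_t\tfrac{\Gamma M_t}{2}$. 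Your outline never produces that negative term, so the Jensen-gap sum survives and the claimed bound does not follow. (A secondary discrepancy: your per-round OGD residue $\tfrac{\eta G^2}{2}$ leaves an $O(\eta T)$ term that does not appear in the statement; the paper avoids it by analyzing the two updates as a single combined gradient step and bounding the quadratic term by $2\gamma\eta G^2$.)
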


\subsection{Approximation Error for Random  Features}\label{sec4.3}
The kernel associated with the space $\mathcal{H}$ is a function defined as  $k:\mathcal{X}^2\times \mathcal{X}^2 \mapsto \mathbb{R}^+$ with a shorthand $k_{(x,x')}(\cdot):=k((x,x'),(\cdot,\cdot))$  and can be constructed  given any uni-variate kernel $\mathcal{G}$ for any $x_1,x_2,x_1',x_2'\in\mathcal{X}$ as follow,
\begin{align}\label{eq:pairwiseKernel1}
 &   k(x_1,x_2,x_1',x_2') = \mathcal{G}(x_1,x_1') + \mathcal{G}(x_2,x_2') \\\nonumber
 &    - \mathcal{G}(x_1,x_2') - \mathcal{G}(x_2,x_1') = \langle \mathcal{G}_{x_1} - \mathcal{G}_{x_2}, \mathcal{G}_{x_1'} - \mathcal{G}_{x_2'} \rangle_\mathcal{G}
\end{align}
The pairwise kernel $k$ defined earlier demonstrates positive semi-definiteness over $\mathcal{X}^2$. This property aligns it with the attributes of a Mercer kernel, as indicated in references like \cite{ying2015online}, when $\mathcal{G}$ exhibits such characteristics over $\mathcal{X}$.
We further assume there exist a lower dimensional mapping $r:\mathcal{X}\mapsto R^D$, such that, $ \mathcal{G}_{x}(\cdot) \approx r(x)^Tr(\cdot)$

The literature extensively explores the accuracy of approximating the pointwise kernel $\mathcal{G}$ using random Fourier features, as discussed in references such as \cite{rahimi2007random}, \cite{bach2017equivalence}, and \cite{li2022sharp}.  
For example if the kernel is the Gaussian kernel, i.e., $\mathcal{G}(x-x') = \frac{1}{\sqrt{2 \pi}d} \exp{-(x-x')^T[diag(\sigma)](x-x')}$, we have $p(u)$ from Fourier transform to be $
    p(u) \propto \mathcal{N}(\mathbf{0},diag(\sigma))
$
where $\sigma \in \mathbb{R}^d$ is the kernel width. Thus kernel $\mathcal{G}$ can be given by a Monte Carlo estimate as follow,
\begin{align}
    \hat{\mathcal{G}}(x,x') \approx \frac{2}{D}\sum_{i=1}^{D/2} \cos(u_i^T(x-x')) = \langle r(x),r(x')\rangle
\end{align}
Where $u_i \sim \mathcal{N}({0},diag(\sigma))$, and  $r(x) := \frac{1}{\sqrt{D/2}}[cos(u_i^Tx),sin(u_i^T,x)]_{i=1}^{D/2}$. The quality of the estimation is given by \cite{rahimi2007random} to be dependent on the number of random features $D$ used to approximate the Fourier transform, which can be extended to pairwise kernels as illustrated in the following corroallary. 

\begin{corollary}\label{RFerror}
Given $x_1,x_2,x_1',x_2'
\in\mathcal{X}$, and pairwise kernel $k$ defined on $\mathcal{X}^2\times\mathcal{X}^2$, the random Fourier estimation of the kernel has mistake bounded with probability at least $1-8\exp{\frac{-D^2\epsilon}{2}}$ as follow,
\begin{align}\nonumber
    \  |\hat{k}_{(x_1,x_2)}(x_1',x_2')   - {k}_{(x_1,x_2)}(x_1',x_2')  | \leq \epsilon  \end{align}
\end{corollary}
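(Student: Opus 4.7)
\medskip

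\noindent\textbf{Proof proposal.} The plan is to reduce the pairwise statement to four independent instances of the pointwise Rahimi--Recht guarantee, and then combine them by a triangle inequality followed by a union bound. Concretely, by the defining identity~\eqref{eq:pairwiseKernel1},
\begin{align*}
k_{(x_1,x_2)}(x_1',x_2') &= \mathcal{G}(x_1,x_1')+\mathcal{G}(x_2,x_2')\\
&\quad-\mathcal{G}(x_1,x_2')-\mathcal{G}(x_2,x_1'),
\end{align*}
and since the random Fourier construction $r(\cdot)$ is the same one used in the paragraph preceding the corollary, the estimator $\hat{k}_{(x_1,x_2)}(x_1',x_2')$ inherits exactly the same additive decomposition with $\mathcal{G}$ replaced by $\hat{\mathcal{G}}(x,y)=\langle r(x),r(y)\rangle$. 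This is the first step and essentially just makes explicit that the pairwise approximation error is a sum of four pointwise approximation errors.

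Next I would apply the triangle inequality to obtain
\begin{align*}
\bigl|\hat{k}_{(x_1,x_2)}(x_1',x_2')-k_{(x_1,x_2)}(x_1',x_2')\bigr|
&\leq \sum_{(a,b)\in S}\bigl|\hat{\mathcal{G}}(a,b)-\mathcal{G}(a,b)\bigr|,
\end{align*}
where $S=\{(x_1,x_1'),(x_2,x_2'),(x_1,x_2'),(x_2,x_1')\}$. If each of the four pointwise errors is at most $\epsilon/4$, then the pairwise error is at most $\epsilon$, so it suffices to control the four summands individually. This is the step where I would have to take a small hit in constants.

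For each fixed pair $(a,b)$, I would invoke the standard pointwise Rahimi--Recht Hoeffding bound: writing $\hat{\mathcal{G}}(a,b)=\tfrac{2}{D}\sum_{i=1}^{D/2}\cos(u_i^\top(a-b))$ as an average of $D/2$ independent random variables bounded in $[-1,1]$ with mean $\mathcal{G}(a,b)$, Hoeffding's inequality gives a tail of the form $2\exp(-cD(\epsilon/4)^2)$ for an appropriate absolute constant $c$. Plugging in $\epsilon/4$ yields the factor claimed in the corollary's exponent (up to constants that I would just track carefully; note the statement's exponent $D^2\epsilon/2$ should morally read like $cD\epsilon^2$, so I would not over-interpret the exact form).

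Finally, a union bound over the four pairs in $S$ multiplies the failure probability by $4$, giving an overall factor of $8$ in front of the exponential, which matches the constant in the corollary. The main thing to be careful about is that the four events are not independent (they share the same draws $u_i$), but the union bound does not require independence, so this causes no difficulty; the only genuine obstacle is bookkeeping of constants to land exactly on the stated expression, which is routine.
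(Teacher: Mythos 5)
Your proposal is correct and matches the argument the paper intends: the paper states the corollary immediately after recalling the Rahimi--Recht pointwise bound and leaves the proof implicit, but the factor of $8 = 4\times 2$ in the failure probability clearly comes from exactly your four-term decomposition of the pairwise kernel followed by a union bound. You are also right to treat the stated exponent $-D^2\epsilon/2$ as a typo for something of the form $-cD\epsilon^2$; tracking the $\epsilon/4$ per summand as you describe is the honest way to land the constants.
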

The following theorem bounds the random Fourier error in equation (\ref{eq:regret}) using the results of corollary \ref{RFerror}.
\begin{theorem}
\label{regret_2}
Given a pairwise Mercer kernel $k_{(x,x')} :=k((x,x'),(\cdot,\cdot))$ defined on $\mathcal{X}^2\times\mathcal{X}^2$. Let $\ell(w,z,z')$ be a convex loss that is Lipschits continuous with constant ${G}$. Then for any $w^* = \sum_{i,j\neq i}^Ta_{i,j}^* k_{(x_i,x_j)}$, and random Fourier features number $D$ we have the following with probability at least $1-2^8\left(\frac{\sigma Diam(\mathcal{X})}{\epsilon}\right)\exp(-D\epsilon^2/(4d+8))$,
\begin{align}
    \sum_{t=2}^T L_t(\bar{w}^*)  - \sum_{t=2}^T L_t(w^*) \leq 
 G T \|w^*\|_1 \epsilon
\end{align}
where $ \|w^*\|_1 = \sum_{i,j\neq i}^T|a^*_{i,j}|$.
\end{theorem}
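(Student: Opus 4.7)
The natural strategy is a ``surrogate witness'' argument: build a specific hypothesis in $\bar{\mathcal{H}}$ whose loss is uniformly close to that of $w^*$, and then invoke the optimality of $\bar w^*$.

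\textbf{Step 1 (construct a witness).} Given $w^* = \sum_{i,j\neq i}^T a^*_{i,j}\,k_{(x_i,x_j)} \in \mathcal{H}$, define its natural twin in the approximated space by keeping the same coefficients but swapping in the approximated kernel:
\[
\tilde{w} \;:=\; \sum_{i,j\neq i}^T a^*_{i,j}\,\hat{k}_{(x_i,x_j)} \;\in\; \bar{\mathcal{H}}.
\]
Because $\bar{w}^* = \arg\min_{w\in\bar{\mathcal{H}}}\sum_t L_t(w)$, we immediately get $\sum_{t=2}^T L_t(\bar{w}^*) \leq \sum_{t=2}^T L_t(\tilde{w})$, and the task reduces to bounding $\sum_t[L_t(\tilde{w}) - L_t(w^*)]$.

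\textbf{Step 2 (uniform RFF bound).} I would upgrade the pointwise Corollary~\ref{RFerror} to a uniform statement over the compact set $\mathcal{X}^2\times\mathcal{X}^2$ via a Rahimi--Recht style $\epsilon$-net/Lipschitz-derivative argument. Concretely, cover $\mathcal{X}$ by an $\epsilon$-net, apply Corollary~\ref{RFerror} plus a union bound at the net points (the four pointwise kernel terms in the pairwise kernel are absorbed into the constant $2^8$), and control the residual via a Markov bound on $\mathbb{E}\|\nabla(\hat k - k)\|^2$ whose finiteness is guaranteed by Assumption~\ref{ass:Kernel}. This yields
\[
\sup_{y_1,y_2,y_1',y_2'\in\mathcal{X}} |\hat{k}((y_1,y_2),(y_1',y_2')) - k((y_1,y_2),(y_1',y_2'))| \leq \epsilon
\]
with the exact probability stated in the theorem.

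\textbf{Step 3 (transfer to losses).} On this good event, for any $(x,x')\in\mathcal{X}^2$,
\[
|\tilde{w}(x,x') - w^*(x,x')| \leq \sum_{i,j\neq i}^T |a^*_{i,j}|\cdot |\hat{k}_{(x_i,x_j)}(x,x') - k_{(x_i,x_j)}(x,x')| \leq \|w^*\|_1\,\epsilon.
\]
Reading Assumption~\ref{ass:Lipschitz continuous} through the fact that the pairwise loss depends on the hypothesis only via the scalar value $w(x_t,x_i)$, the $G$-Lipschitz property then gives $|\ell(\tilde{w},z_t,z_i) - \ell(w^*,z_t,z_i)| \leq G\,\|w^*\|_1\,\epsilon$ for every $(t,i)$. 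Averaging over $i=1,\dots,t-1$ preserves the same bound, and summing $t=2,\dots,T$ yields $\sum_t L_t(\tilde{w}) - \sum_t L_t(w^*) \leq G T \|w^*\|_1 \epsilon$. Combined with Step~1, this produces the claim.

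\textbf{Expected obstacle.} The routine parts are Steps~1, 3, and 4; the real technical work sits in Step~2, namely lifting the single-quadruple concentration bound of Corollary~\ref{RFerror} to a bound uniform over all $(x_1,x_2,x_1',x_2')$ while paying only the polynomial factor $2^8\sigma\,\mathrm{Diam}(\mathcal{X})/\epsilon$ and the exponent $1/(4d+8)$. Controlling the Lipschitz constant of $\hat k - k$ (equivalently, the second moment of the Fourier frequencies) is the delicate estimate, because the pairwise kernel is a sum of four translation-invariant pieces and one must verify that the covering-number inflation does not blow up with the $\mathcal{X}^2$ structure but can be handled with the same effective dimension $d$ as the pointwise case.
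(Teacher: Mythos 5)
Your proposal is correct and follows essentially the same route as the paper: compare $w^*$ to a hypothesis in $\bar{\mathcal{H}}$ with the same expansion coefficients but the RFF-approximated kernels, invoke the uniform Rahimi--Recht kernel error bound (Corollary~\ref{RFerror}), and transfer to the losses via $G$-Lipschitzness to accumulate $GT\|w^*\|_1\epsilon$. If anything, your explicit witness $\tilde{w}$ plus the optimality of $\bar{w}^*$ is the rigorous version of the paper's ``without loss of generality assume $\bar{w}^*=\sum\alpha^*_{j,k}\hat{k}$'' step, which silently identifies the minimizer in $\bar{\mathcal{H}}$ with that twin.
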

 \begin{remark}
A trade-off between $\|w^*\|_1$ and $\epsilon$ exists when independent support vectors are in the kernel space ($|a^*_{i,j} |> 0$). In such cases, a larger error requires a smaller $\epsilon$, implying a larger $D$. Regularization can set insignificant support vectors' $a^*_{i,j}$ to zero, a subject for potential future research.
 \end{remark}

\section{Related Work}
While online gradient descent exhibits a time complexity of $O(T^2)$ \citep{boissier2016fast,kar13,gao2013one}, this remains impractical for large-scale problems. For example, in AUC maximization, pairing a data point $x_t, y_t=1$ with all preceding ${x_{t'}, y_{t'}=-1|1<t'<t-1}$ to ascertain the true loss poses a key challenge. This approach necessitates computing gradients $\nabla {L}t(f{t-1})$ for all received training examples, growing linearly with $t$.

Buffer-based methods have been explored in the literature to manage this challenge. \citep{zhao2011online} introduced buffers $B_+$ and $B_-$, each of size $N_+$ and $N_-$, considering only points within the buffers at each step $t$. Reservoir Sampling updates the buffers while ensuring uniform sampling. Theorem 1 in \citep{zhao2011online} establishes sublinear regret linked to buffer sizes. However, this work is limited to AUC with linear models and does not explicitly address the buffer size's impact on generalization error.

In contrast, \citep{kar13} proposed modified reservoir sampling (RSx) by performing $s$ Bernoulli processes with probability $1/t$ to replace each buffer point with the received data point. This approach ensures $s$ data points are iid samples from the stream and independent from $w$, with the generalization bound depending on Rademacher complexity and regret. Nevertheless, optimal generalization requires $s=O(\sqrt{T})$ for buffer loss.

Another parallel research addressing the high variance in the buffer has been proposed recently \cite{alquabeh2023variance}, coinciding with the publication of our paper. Recently, \citep{yang2021simple} suggested optimal generalization with a buffer size of $s=1$, assuming $iid$ data and independent buffer-data relations. Their work aligns with ours, but we remove the $iid$ assumption for convergence proof and consider non-linear model spaces. 

\section{Experiments}
We perform experiments on several real-world datasets and compare our proposed algorithm to both offline and online pairwise algorithms. Specifically, the proposed method is compared with different pairwise algorithms on the AUC maximization task with the squared function as the surrogate loss.

\subsection{Experimental Setup}

\begin{table*}[!t]
\label{table:AUC}
   	\small
 \center
\begin{tabular}{lllllll}
\hline
{\textbf{Dataset}} & {\textbf{AOGD}} & {\textbf{SPAM-NET}} & {\textbf{OGD}} & \textbf{Sparse Kernel} & \textbf{Projection++} & \textbf{Kar}\\ 
\hline 
diabetes  & {81.91}$\pm$0.48        & 82.03$\pm$0.32          & 82.53$\pm$0.31            & 82.64$\pm$0.37           & 77.92$\pm$1.44   & {79.85}$\pm$0.28      \\

ijcnn1                   & \textbf{92.32}$\pm$0.77                 & 87.01$\pm$0.10          & 89.46$\pm$1.22            & 71.13$\pm$0.59           & 92.20$\pm$0.27 & 83.44$\pm$1.21           \\

 a9a  & \textbf{90.03}$\pm$0.41  & 89.95$\pm$0.42 & 90.01$\pm$0.42           & 84.20$\pm$0.17 & 84.42$\pm$0.33  & 77.93$\pm$1.55 \\

mnist &\textbf{96.98}$\pm$0.38 &{96.57}$\pm$0.54 & 89.56$\pm$0.34& 95.21$\pm$0.15&93.82$\pm$0.15 &89.16$\pm$0.14\\

 rcv1   &\textbf{ 99.38}$\pm$0.48   & 98.13$\pm$0.15 &  99.05$\pm$0.57  & 96.26$\pm$0.35 & 94.54 $\pm$0.36 & 97.78 $\pm$0.64\\
  usps   & \textbf{92.76}$\pm$0.48   & 89.12$\pm$0.88 &  89.88$\pm$0.47  & 91.25$\pm$0.84 &90.14 $\pm$0.22 &  89.58$\pm$0.25 \\
    german   & \textbf{80.75}$\pm$0.48   & 78.55$\pm$2.75 &  79.25$\pm$0.42  & 80.11$\pm$0.44 &78.44 $\pm$0.66 &  77.24$\pm$0.52\\
\hline
Reg. & $l_2$  &    $l_1+l_2$  &    $l_2$ & $l_2$  &$l_2$& $l_2$ \\
\hline 
\end{tabular}
\caption{AUC maximization results (average  $\pm$ standard error)$\times10^2$ using different batch and online algorithms on different datasets}
\end{table*}

\begin{figure*}        
\label{fig:results}
     \centering
         \includegraphics[width=0.32\textwidth]{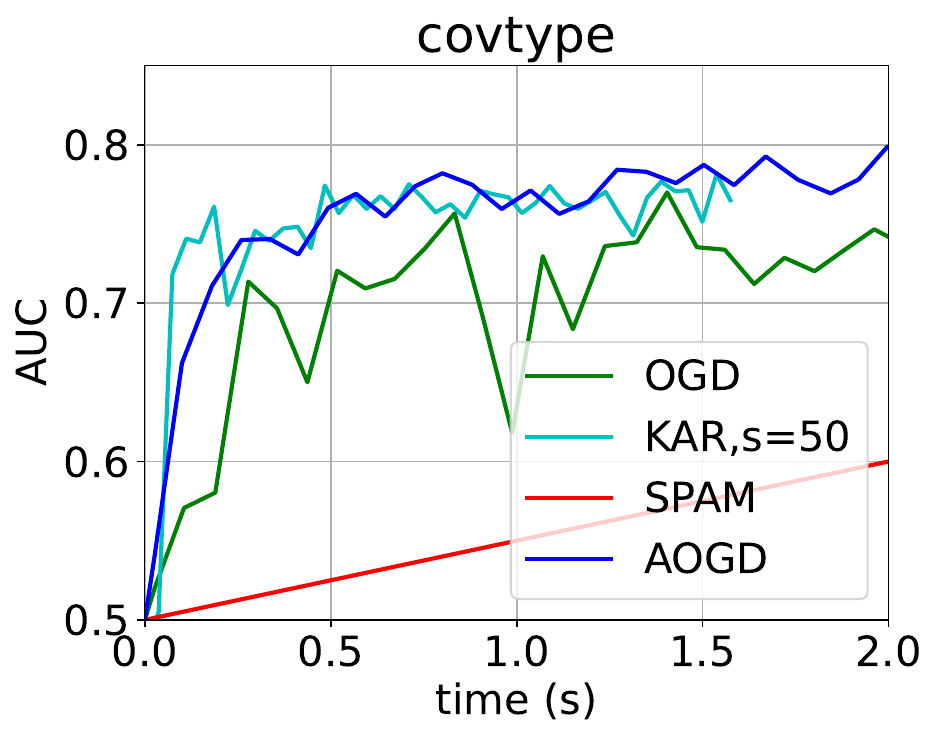}
         \includegraphics[width=0.32\textwidth]{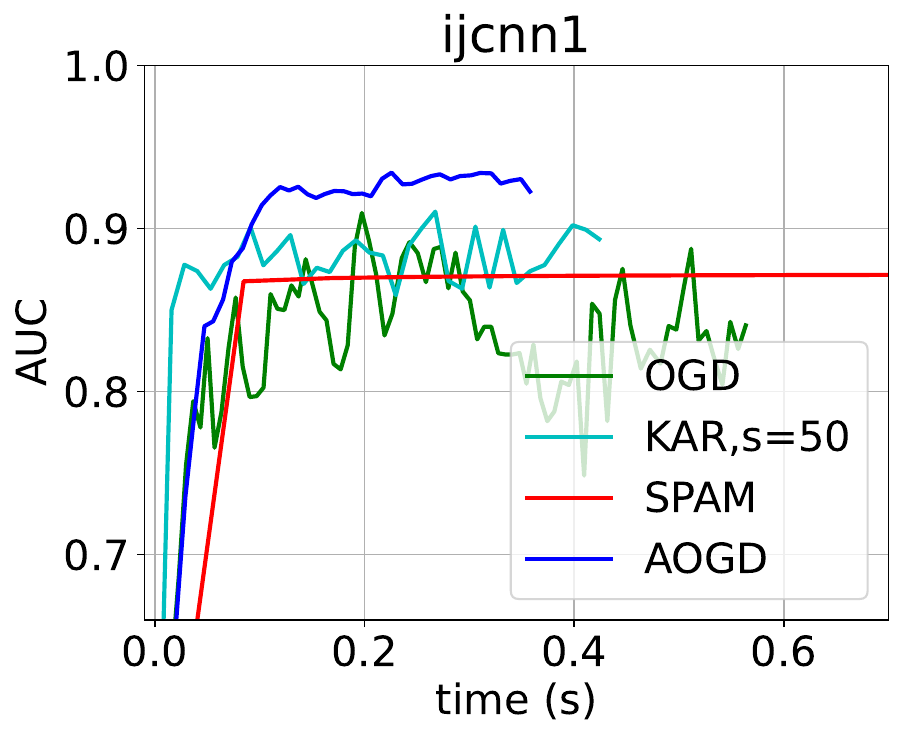}
         \includegraphics[width=0.32\textwidth]{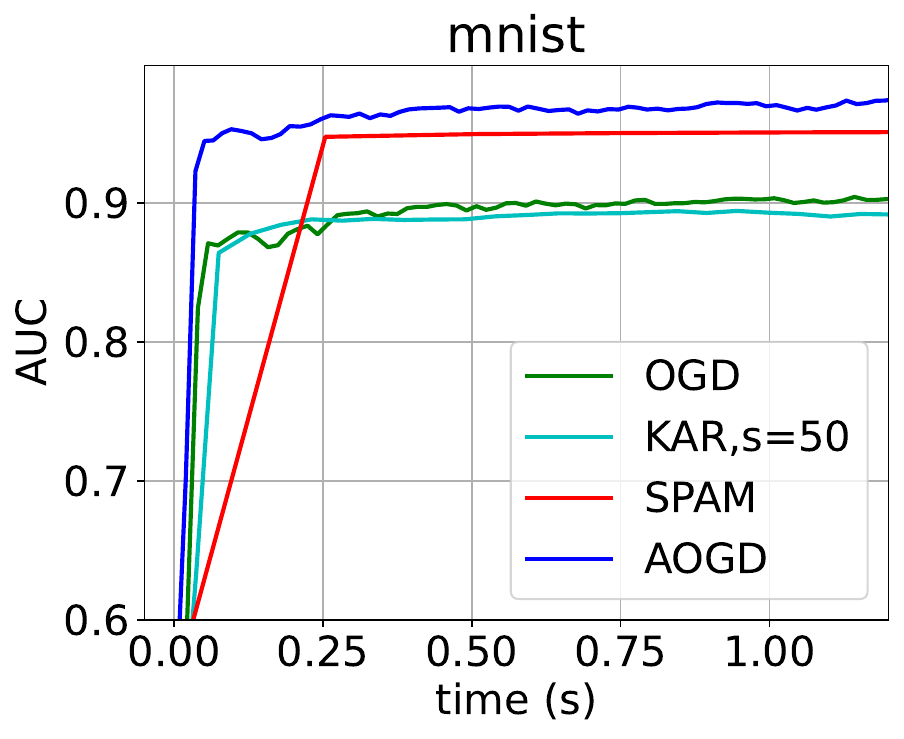}
         \includegraphics[width=0.32\textwidth]{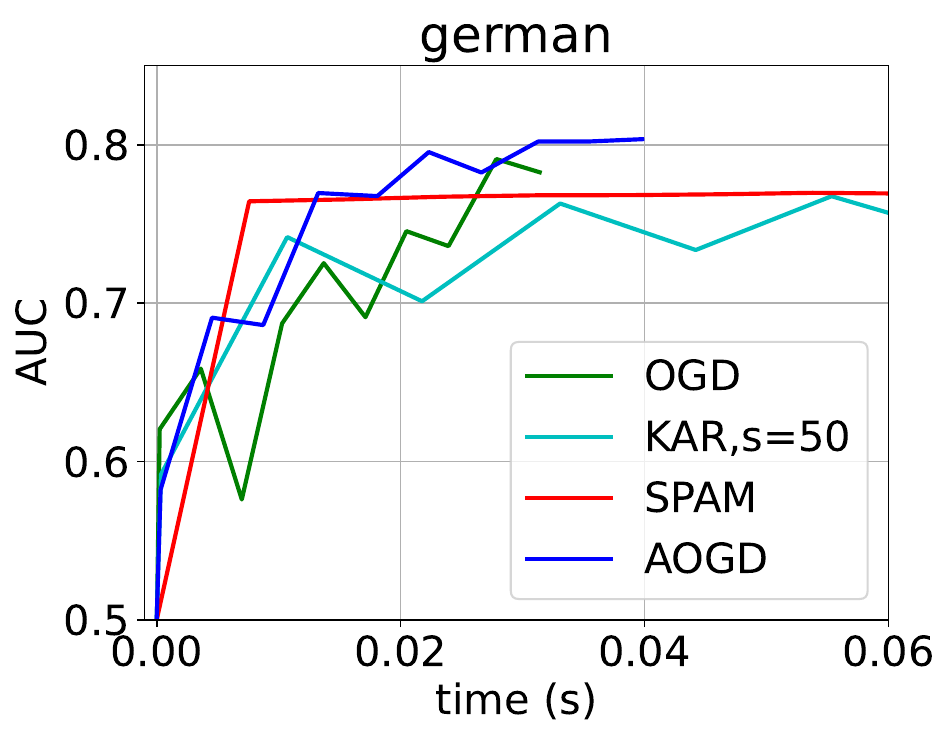}
        \includegraphics[width=0.32\textwidth]{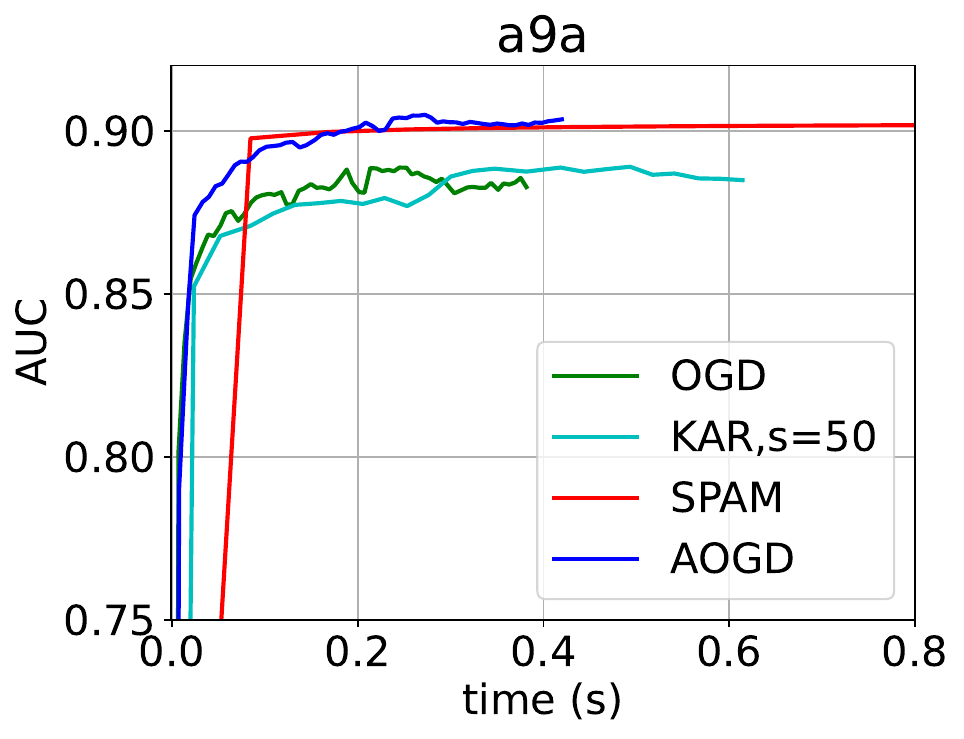}
         \includegraphics[width=0.32\textwidth]{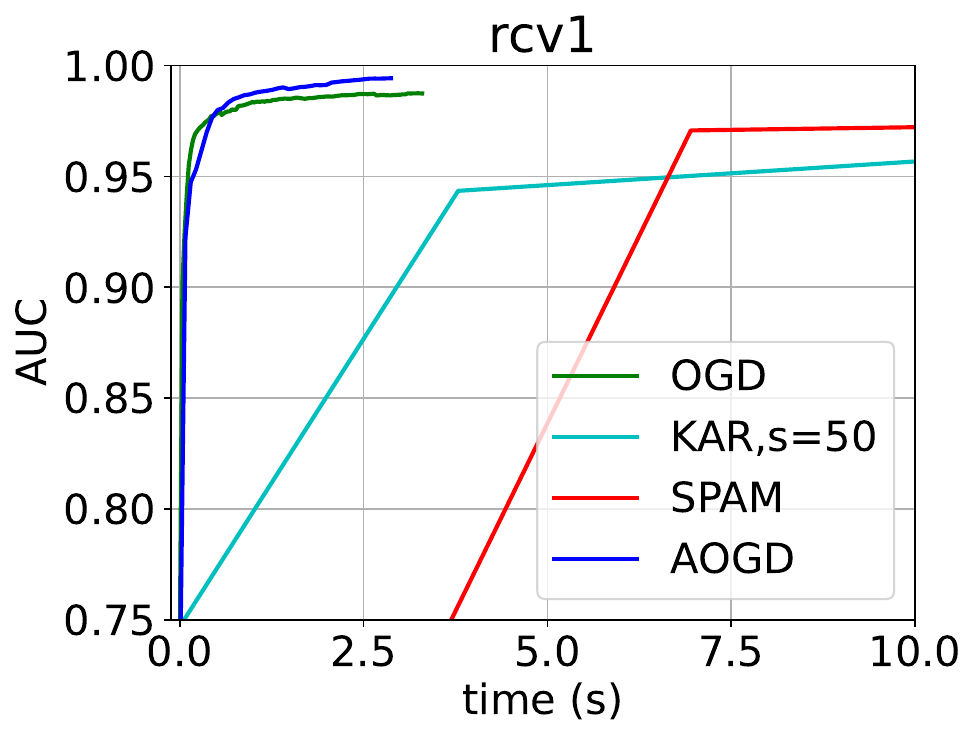}

             \caption{The AUC vs. time comparison of the algorithms in different datasets showing superior performance of the proposed method. }
             \centering
\end{figure*}
\paragraph{Compared Algorithms.} The compared algorithms including offline and online settings, as illustrated below,
\begin{itemize}
    \item SPAM-NET \citep{reddi2016stochastic} is an online algorithm for AUC with square loss that is converted to a saddle point problem with non-smooth regularization.
    \item OGD \citep{yang2021simple}, the most similar to our algorithm but with a linear model, that uses the last point every iteration.
    \item Sparse Kernel \citep{kakkar2017sparse} is an offline algorithm for AUC maximization that uses the kernel trick.
    \item Projection ++ \citep{hu2015kernelized} is an online algorithm with an adaptive support vector set.
    \item Kar \citep{kar13} is online algorithm with buffer of size $s$.
\end{itemize}



\paragraph{Implementation.} The datasets of investigation are available on LIBSVM website \citep{CC01a}. The experiments are validated for all algorithms by performing a grid search on the hyperparameters with three-fold cross-validation. For instance, in all algorithms  the step size $\eta\in 2^{[-8:-1]}$, and the regularization parameters $\lambda \in 10^{[-8:-1]}$.  All the algorithms have been run five times on different folds on a CPU of 4 GHz speed and 16 GB memory.
\subsection{Experimental Results and Analysis}
Our results on maximizing the area under the curve (AUC) using a squared loss function verify the efficacy of our random Fourier pairwise online gradient descent procedure.
Table 2 shows that when compared to online and offline, linear and nonlinear pairwise learning algorithms, our algorithm achieves improved AUC performance on large-scale datasets.
\par
Figure 1 displays the AUC performance relative to CPU utilization.
Results of ijcnn1 and mnist show that compared to other state-of-the-art online pairwise learning methods, our algorithm not only has better AUC performance but also has a faster convergence rate.
As compared to the two most popular kernel algorithms, Sparse kernel, and Projection++, our technique provides superior AUC performance across the board.
Projection++ is impractical for large-scale datasets because of its hyper-parameters and the requirement to regularly update a support vector set.

\section{Conclusion}
In summary, our work introduces a lightweight online kernelized pairwise learning algorithm applicable to both linear and non-linear models. The algorithm efficiently computes the online gradient using a moving average and random examples, with the kernel function approximated through random Fourier mapping. Achieving a gradient complexity of $O(T)$ for linear models and $O(\frac{D}{d}T)$ for nonlinear models, our approach showcases computational efficiency. The incorporation of multiple moving averages, a potential avenue for future research, could further enhance adaptability and robustness.
Notably, we address the challenges of non-iid data by dual evaluation of gradients, leveraging random Fourier features for efficient kernel estimation with a sublinear error bound. This integration of kernelization, approximation, and dynamic averaging transcends linear constraints, effectively tackling non-iid data challenges and mitigating kernel complexity.

\bibliography{refs}

\onecolumn

\section{Appendix}

\section{Proof of Theorem 2}
Given the linearity of the model, expressed as $f_w(x) = \langle w, x \rangle$, the convexity of the loss function in the weight variable $w$ implies convexity in the input variable. In other words, the linearity of the model suggests that if the loss function is convex with respect to the weight variable $w$, it will also exhibit convexity concerning the input variable. This connection further reinforces the relationship between the convexity of the loss function, linearity of the model, and convexity properties in both the weight and input variables. Therefore, given the convexity property of the loss function $\ell$, upon the arrival of a new example $z_t$, the loss based on the average, denoted as $\bar{z} := (\mathbb{E}[x | y \neq y_t], y) = \left(\sum_{i=1}^{t-1}\frac{x_i}{t-1}, y_{t-1} | {y_{t-1} \neq y_t}\right)$, can be linked to the local all-pairs loss using Jensen's inequality as follow,

\begin{equation}
\ell(w, z_t, \bar{z}) \leq \frac{1}{t-1}\sum_{t=1}^{t-1} \ell(w, z_t, z_i)
\end{equation}

Alternatively, representing the average-based loss as $\bar{L}(w)$ and the cumulative loss as $L(w) = \frac{1}{t-1}\sum_{t=1}^{t-1} \ell(w, z_t, z_i)$, we introduce a slack variable at $w = w_{t-1}$ as follows,

\begin{equation}
\bar{L}_t(w{t-1}) = L(w_{t-1}) - g(t)
\end{equation}

Here, $g(t) \geq 0$ denotes Jensen's gap. This relation establishes a mathematical connection between the average-based loss and the all-pairs loss through the introduction of this slack variable. Additionally, it's noteworthy that this gap can be bounded by leveraging the smoothness of the loss function and considering the variance inherent in the received variables, as demonstrated in Lemma \ref{lemma:Jensen1}. 
\begin{lemma}\label{lemma:Jensen1}
    Given that the data $\{z_1,\dots,z_t\}$ is linearly separable and have bounded total variance, i.e. $\Gamma \geq tr(cov[z_t]) $ and a loss function with $M_t:M(w_t)$ smoothness parameters at $w=w_t$, the Jensen's gap can be bounded as follow,
      \begin{align}
      g(t)  \leq \frac{1}{2}  \Gamma M_t 
    \end{align}
    
\end{lemma}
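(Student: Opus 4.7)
The plan is to obtain the bound by invoking the standard second-order Taylor upper bound that smoothness provides, applied in the input variable rather than the weight variable. The starting point is to write out Jensen's gap explicitly as
\begin{align}
g(t) = \frac{1}{t-1}\sum_{i=1}^{t-1} \ell(w_{t-1}, z_t, z_i) - \ell(w_{t-1}, z_t, \bar z),
\end{align}
where $\bar z = \frac{1}{t-1}\sum_{i=1}^{t-1} z_i$ (restricted to the complementary label, but this is irrelevant for the argument). Non-negativity of $g(t)$ comes from Jensen via convexity of $\ell(w,z_t,\cdot)$, which is inherited from Assumption \ref{ass:Convexity} through the linearity of $f_w$.

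Next, I would transport the smoothness assumption from the weight argument to the input argument. Since the hypothesis is linear, i.e.\ $f_w(x) = \langle w, x\rangle$, the composition $z \mapsto \ell(w, z_t, z)$ is a composition of a linear map with a smooth convex function, hence itself smooth; a short chain-rule computation gives an induced smoothness constant proportional to $M_t$ (and one can absorb any data-dependent constant into the definition of $M_t:=M(w_t)$ used in the statement). With that in hand, the quadratic upper bound from Assumption \ref{ass:smmoth} applied in $z$ around the mean $\bar z$ yields, for each $i$,
\begin{align}
\ell(w_{t-1}, z_t, z_i) \leq \ell(w_{t-1}, z_t, \bar z) + \nabla_z \ell(w_{t-1}, z_t, \bar z)^T (z_i - \bar z) + \frac{M_t}{2} \|z_i - \bar z\|^2.
\end{align}

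Averaging this inequality over $i = 1, \ldots, t-1$ makes the linear term collapse because $\frac{1}{t-1}\sum_i (z_i - \bar z) = 0$ by definition of $\bar z$. What remains is
\begin{align}
\frac{1}{t-1}\sum_{i=1}^{t-1} \ell(w_{t-1}, z_t, z_i) - \ell(w_{t-1}, z_t, \bar z) \leq \frac{M_t}{2(t-1)} \sum_{i=1}^{t-1} \|z_i - \bar z\|^2 = \frac{M_t}{2}\, \mathrm{tr}(\mathrm{cov}[z_t]) \leq \frac{M_t \Gamma}{2},
\end{align}
using the assumed total-variance bound $\Gamma \geq \mathrm{tr}(\mathrm{cov}[z_t])$. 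This is exactly $g(t) \leq \tfrac{1}{2}\Gamma M_t$.

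The one step that is not purely mechanical is the transfer of smoothness from $w$ to $z$: Assumption \ref{ass:smmoth} is stated for the loss as a function of the weight, and I would rely on the linearity of the model (together with the implicit assumption of bounded inputs, which is consistent with Assumption \ref{ass:Kernel}) to conclude that the induced constant in the input direction is of the same order $M_t$. A secondary but minor point is that the statement mentions linear separability; this does not appear to be needed for the bound itself and I would simply note that it is used elsewhere (e.g.\ to justify convexity of the relevant per-example loss at $w_{t-1}$), rather than in the Taylor argument above.
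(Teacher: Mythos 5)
Your proposal is correct and follows essentially the same route as the paper's own proof: transfer the $M_t$-smoothness from $w$ to $z$ via linearity of the model, apply the quadratic upper bound around $\bar z$, and average over the historical examples so that the linear term vanishes and the quadratic term becomes $\tfrac{M_t}{2}\,\mathrm{tr}(\mathrm{cov}[z]) \leq \tfrac{1}{2}\Gamma M_t$. Your side remarks (that the smoothness transfer is the only non-mechanical step, and that linear separability is not actually used in the Taylor argument) are both accurate and match what the paper leaves implicit.
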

\begin{proof}
   Given that the loss function $\ell(w_{t-1})$ is $M_t$-smooth, and considering the linearity of the model in the approximated space $\bar{\mathcal{H}}$, expressed as $f_w(x) = \langle w, x \rangle$, it follows that if the function is smooth in $w$, it is also smooth in $x$ (or $z$ in our context). Let $Q := {z_1, \dots, z_{t-1}}$ represent the set of historical examples. Define $\bar{z} := (\mathbb{E}[x | y \neq y_t], y)$ where the expectation is taken with respect to the uniform probability over the set $Q$. Utilizing the smoothness property of $\ell$ from assumption \ref{ass:smmoth} and considering expectations,
    \begin{align}\nonumber
        \ell(w_{t-1},z_t,z) & \leq \ell(w_{t-1},z_t,\bar{z}) + \nabla_{z} \ell(w_{t-1},z_t,\bar{z})^T(z-\bar{z}) + \frac{M_t}{2}(z-\bar{z})^T(z - \bar{z}) \\ \nonumber
        \mathbb{E}_z \ell(w_{t-1},z_t,z) -\ell(w_{t-1},z_t,\bar{z})   & \leq  \mathbb{E}_z \nabla_{z} \ell(w_{t-1},z_t,\bar{z})^T(z-\bar{z}) +\mathbb{E}_z \frac{M_t}{2}(z-\bar{z})^T(z - \bar{z}) \\ \nonumber
        L(w_{t-1}) - \bar{L}(w_{t-1})  & \leq \mathbb{E}_z \nabla_{z} \ell(w_{t-1},z_t,\bar{z})^T(z-\bar{z}) + \mathbb{E}_z\frac{M_t}{2}(z-\bar{z})^T(z - \bar{z})    \\ \nonumber
        & = \frac{M_t}{2} \mathbb{E}_z \|z - \bar{z}\|^2=  \frac{M_t}{2} \mathbb{E}_z \sum_i^d (z_i - \bar{z}_i)^2 \\ 
        &=  \frac{M_t}{2} \sum_i^d \mathbb{E}_z(z_i - \bar{z}_i)^2 =  \frac{M_t}{2} \sum_i^d Var[z_i] =  \frac{M_t}{2} tr(cov[z])
    \end{align}
    Replace the expression for $g(t)$ in order to derive the outcomes. 
\end{proof}
It's important to highlight that $tr(\text{cov}[z])$ denotes the trace of the covariance matrix related to the variable $z$ (or the feature $x$ of $z$). We will colloquially refer to this as the total variance.
Next we present the following lemma, which provides an upper bound on the buffer loss (based on the average), before we establish the regret bounds for the all-pairs loss in Theorem 2.
\begin{lemma}\label{theorem:regret 1}
Assume that assumptions \ref{ass:Lipschitz continuous} and \ref{ass:Convexity} hold. Let $[w_t]_{i=1}^T$ be the sequence of functions returned by running the algorithm 1 for $T$ times using online sequence of data. Then if $\bar{w}^* =\arg\min_{w \in \bar{\mathcal{H}}} \sum_{t=2}^T {L}_t(w)$, we have the following: 
\begin{align}
    \sum_{t=2}^T \bar{L}_t(w_{t-1}) \leq   \sum_{t=2}^T \bar{L}_t(\bar{w}^*) +\frac{\|\bar{w}^*\|^2 }{2\eta}+  \gamma G^2T  -\sum_{t=2}^{T} \frac{\gamma}{\eta}\hat{v}_t^T(w_{t-1}-\bar{w}^*)
\end{align}
\end{lemma}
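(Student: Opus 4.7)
The plan is to treat Algorithm 1's two-step update as the single composite update $w_t = w_{t-1} - \eta \bar{v}_t - \gamma \hat{v}_t$, where I will write $\bar{v}_t := \nabla \bar{\ell}(w_{t-1}, z_t, \bar{z})$ and $\hat{v}_t := \nabla \hat{\ell}(w'_t, z_t, \hat{z})$, and then follow the standard Zinkevich-style online convex optimization argument, with extra care to retain the cross-gradient term that gives the lemma its characteristic last summation.

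First I would expand $\|w_t - \bar{w}^*\|^2$ using the composite update to obtain
\begin{align*}
\|w_t - \bar{w}^*\|^2 = \|w_{t-1} - \bar{w}^*\|^2 - 2\eta\,\bar{v}_t^T(w_{t-1} - \bar{w}^*) - 2\gamma\,\hat{v}_t^T(w_{t-1} - \bar{w}^*) + \|\eta \bar{v}_t + \gamma \hat{v}_t\|^2.
\end{align*}
Rearranging for $\bar{v}_t^T(w_{t-1}-\bar{w}^*)$ and dividing by $2\eta$ produces a telescoping term, the cross-correction $-(\gamma/\eta)\hat{v}_t^T(w_{t-1}-\bar{w}^*)$, and residual norm-squared contributions. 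I would then invoke Assumption \ref{ass:Convexity} on $\bar{\ell}(\cdot, z_t, \bar{z})$ to upper bound $\bar{L}_t(w_{t-1}) - \bar{L}_t(\bar{w}^*) \leq \bar{v}_t^T(w_{t-1}-\bar{w}^*)$, yielding a per-step inequality for the average-buffer excess loss.

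Next I would sum from $t = 2$ to $T$ so that the $\|w_{t-1} - \bar{w}^*\|^2 - \|w_t - \bar{w}^*\|^2$ terms telescope; with the standard initialization $w_1 = 0$, this contributes $\|\bar{w}^*\|^2/(2\eta)$. The quadratic residuals $\eta^2\|\bar{v}_t\|^2$, $\gamma^2\|\hat{v}_t\|^2$, and the cross term $2\eta\gamma\,\bar{v}_t^T\hat{v}_t$ are then controlled using Assumption \ref{ass:Lipschitz continuous}, which gives $\|\bar{v}_t\|,\|\hat{v}_t\|\leq G$, together with Cauchy--Schwarz on the cross term. Provided the step sizes satisfy $\gamma = O(\eta)$ as in the algorithm, these pieces consolidate into a single $\gamma G^2 T$ term as stated, while the $-(\gamma/\eta)\sum_t \hat{v}_t^T(w_{t-1}-\bar{w}^*)$ summation is kept explicit, to be reabsorbed later in the proof of Theorem \ref{th:Allpairs}.

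The main obstacle, I expect, is the two-stage nature of the update: because $\hat{v}_t$ is evaluated at the intermediate point $w'_t$ rather than $w_{t-1}$, convexity cannot be applied to convert $\hat{v}_t^T(w_{t-1}-\bar{w}^*)$ into a loss gap, which is precisely why this quantity remains in the statement of the lemma. Showing that the residual quadratic terms $\eta G^2 T/2$ and $\gamma^2 G^2 T/(2\eta)$ can be cleanly absorbed into the advertised $\gamma G^2 T$ under the prescribed step-size relationship $\gamma = O(\Gamma M_t \eta)$ is the second delicate bookkeeping step; the remainder of the argument is a straightforward telescoping calculation.
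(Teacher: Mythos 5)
Your proposal follows essentially the same route as the paper's proof: expand $\|w_t-\bar{w}^*\|^2$ for the composite update $w_t = w_{t-1}-\eta\bar{v}_t-\gamma\hat{v}_t$, apply convexity only to the $\bar{v}_t$ term (correctly noting that $\hat{v}_t$ is evaluated at $w'_t$ and so must be carried along explicitly), telescope with $w_1=0$, and control the quadratic residual via the Lipschitz bound $\|\bar{v}_t\|,\|\hat{v}_t\|\leq G$. The bookkeeping concern you flag at the end is genuine but is equally present in the paper's own argument, which bounds $\|\eta\bar{v}_t+\gamma\hat{v}_t\|^2$ by $2\gamma\eta G^2$ and thereby silently discards the $\eta^2G^2$ and $\gamma^2G^2$ contributions that your more careful expansion would retain as additional $\eta G^2T/2+\gamma^2G^2T/(2\eta)$ terms.
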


\begin{proof}
Let $\bar{\ell}_t(\cdot)=\ell(\cdot,z_t,\bar{z})$ be the loss based on the average of history examples, where $\bar{z} := (\mathbb{E}x|{y\neq y_t,y})$ and the expectation is w.r.t. uniform probability over previous examples  , and $\bar{v}_t \in \partial \bar{\ell}_t(w_{t-1})$ then the first update is $ w'_t:= w_{t-1} - \eta_t \bar{v}$. Let $\hat{\ell}_t(\cdot) = \ell(\cdot,z_t,z) $ where $z$ is a random sample from $\{z_1,\dots,z_{t-1}\}$ . Let $\hat{v}_t \in \partial \hat{\ell}_t(w'_t)$. If we take the distance of two subsequent models to the optimal model we have, 
 	\begin{align}
 	\nonumber&   \| w_t - \bar{w}^* \|^2 - \| w_{t-1} - \bar{w}^* \|^2= \| w_{t-1} - \eta_t (\bar{v}_t +\frac{\gamma_t}{\eta_t}\hat{v}_t) -  \bar{w}^* \|^2 - \| w_{t-1} - \bar{w}^* \|^2 \\\nonumber
 	&=\| w_{t-1} - \bar{w}^* \|^2-2 \eta_t (\bar{v}_t +\hat{v}_t)^T(w_{t-1}-\bar{w}^*) + \eta_t^2 \|\bar{v}_t +\hat{v}_t\|^2-\| w_{t-1}- \bar{w}^* \|^2\\ \nonumber
 	& = -2 \eta_t (\bar{v}_t +\frac{\gamma_t}{\eta_t}\hat{v}_t)^T(w_{t-1}-\bar{w}^*) + \eta_t^2 \|\bar{v}_t +\frac{\gamma_t}{\eta_t}\hat{v}_t\|^2 \\ \nonumber
 	& \overset{(a)}{\leq} -2 \gamma_t \hat{v}_t^T(w_{t-1}-\bar{w}^*) -2 \eta_t ( \bar{\ell}_t(w_{t-1}) - \bar{\ell}_t(\bar{w}^*) ) +  \eta_t^2 \|\bar{v}_t +\frac{\gamma_t}{\eta_t}\hat{v}_t\|^2 \\ \nonumber
 	& \overset{(b)}{\leq} -2 \gamma_t \hat{v}_t^T(w_{t-1}-\bar{w}^*) -2 \eta_t ( \bar{\ell}_t(w_{t-1}) - \bar{\ell}_t(\bar{w}^*) )  +  2\gamma_t\eta_t G^2
 	\end{align}
 Where the inequality (a) implements assumption \ref{ass:Convexity}, i.e. $ -v_t^T(w_{t-1}-\bar{w}^*) \leq -(g_t(w_{t-1}) - g_t(\bar{w}^*))$. And inequality (b) implements assumption 1, i.e. $\|\nabla \ell(\cdot)\|\leq G$.
Setting the step size $\eta_t=\eta$ and $\gamma_t=\gamma$ for all $t$, we can get
	\begin{align}\nonumber
	&\bar{\ell}(w_{t-1},z_t,\bar{z}) - \bar{\ell}(\bar{w}^*,z_t,\bar{z})\\ 
    &\quad\leq \frac{  \| w_{t-1}-\bar{w}^*\|^2 - \|w_t - \bar{w}^*\|^2}{2\eta}+\gamma G^2  -\frac{\gamma}{\eta} \hat{v}_t^T(w_{t-1}-\bar{w}^*) 
	\end{align}

Finally if we sum from $t=2$ to $t=T$ and set $w_1=0$.
\begin{align}\nonumber
\sum_{t=2}^{T}\ell(w_{t-1},z_t,\bar{z})  - \sum_{t=2}^{T}  \ell(\bar{w}^*,z_t,\bar{z}) & \leq \frac{\|\bar{w}^*\|^2 - \|w_T - \bar{w}^*\|^2}{2\eta}+  \sum_{t=2}^{T} \gamma G^2 -\sum_{t=2}^{T} \frac{\gamma}{\eta}\hat{v}_t^T(w_{t-1}-\bar{w}^*) \\
& \leq \frac{\|\bar{w}^*\|^2 }{2\eta}+  \gamma G^2T  -\sum_{t=2}^{T} \frac{\gamma}{\eta}\hat{v}_t^T(w_{t-1}-\bar{w}^*) 
\end{align}
Substitute the definition of $\bar{L}_t(\cdot) =\ell(\cdot,z_t,\bar{z})$ to have the results. This completes the proof. 
\end{proof} 
Now we are ready to prove theorem 2 using the previous two lemmas.
\begin{theorem}[Theorem 2 restated]\label{th:Allpairs1}
Assume that assumptions \ref{ass:Lipschitz continuous},\ref{ass:smmoth}, and \ref{ass:Convexity} hold. Let $[w_t]_{i=1}^T$ be the sequence of models returned by running the algorithm for $T$ times using online sequence of data. Then if $\bar{w}^* =\arg\min_{w \in \bar{\mathcal{H}}} \sum_{t=2}^T {L}_t(w)$, we have the following,
   \begin{align}\label{allpairsregret1}
  \sum_{t=2}^T L_t(w_{t-1})  &\leq  \sum_{t=2}^T L_t(\bar{w}^*)
  + \frac{ \| \bar{w}^*\|^2}{2\eta} + 2 \gamma G^2T  
\end{align} 

\end{theorem}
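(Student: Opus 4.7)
The plan is to reduce the all-pairs-loss regret to the average-based-loss regret already controlled by Lemma \ref{theorem:regret 1}, paying for the reduction with the Jensen gap bound of Lemma \ref{lemma:Jensen1}. First I would write $L_t(w_{t-1}) = \bar{L}_t(w_{t-1}) + g(t)$ by the very definition of the Jensen gap $g(t)$, and separately invoke convexity of $\ell(w,z_t,\cdot)$ in its third argument (which holds because the model is linear in $\bar{\mathcal{H}}$, as remarked at the start of the appendix) together with Jensen's inequality applied at the average point $\bar{z}$ to obtain $\bar{L}_t(\bar{w}^*) \leq L_t(\bar{w}^*)$. Summing both identities and rearranging yields
\begin{align*}
\sum_{t=2}^{T}[L_t(w_{t-1}) - L_t(\bar{w}^*)] \leq \sum_{t=2}^{T}[\bar{L}_t(w_{t-1}) - \bar{L}_t(\bar{w}^*)] + \sum_{t=2}^{T} g(t).
\end{align*}

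Next I would invoke Lemma \ref{theorem:regret 1} to bound the first bracketed sum on the right by $\|\bar{w}^*\|^2/(2\eta) + \gamma G^2 T - \sum_{t=2}^{T}(\gamma/\eta)\hat{v}_t^T(w_{t-1}-\bar{w}^*)$. The first two pieces already match the target, so the remaining work is to absorb both $\sum_t g(t)$ and the inner-product residual into a single extra $\gamma G^2 T$. For the residual I would use the definition $w'_t = w_{t-1} - \eta \bar{v}_t$ to split $w_{t-1}-\bar{w}^* = (w'_t-\bar{w}^*) + \eta \bar{v}_t$, apply convexity of $\hat{\ell}_t$ at the intermediate iterate $w'_t$ to get $-\hat{v}_t^T(w'_t - \bar{w}^*) \leq \hat{\ell}_t(\bar{w}^*) - \hat{\ell}_t(w'_t)$, and bound the cross term $\eta\hat{v}_t^T \bar{v}_t \leq \eta G^2$ by Cauchy--Schwarz and assumption \ref{ass:Lipschitz continuous}. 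The Jensen-gap sum is bounded termwise by $\frac{1}{2}\Gamma M_t$ via Lemma \ref{lemma:Jensen1}.

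The delicate step will be the final absorption. The Jensen-gap sum scales like $T\Gamma M_t$ and the residual inner product scales like $(\gamma/\eta)$ times a priori uncontrolled quantities, while the target leaves room for only one extra $\gamma G^2 T$. Fitting everything cleanly relies on the step-size coupling $\gamma = O(\Gamma M_t \eta)$ introduced in Theorem 1, so that $\gamma/\eta \asymp \Gamma M_t$ exactly matches the data-variance scale of the Jensen gap; under this coupling the $\hat{\ell}_t(\bar{w}^*) - \hat{\ell}_t(w'_t)$ progress term and the $\frac{1}{2}\Gamma M_t$ gap term are of comparable order and can be folded into the single $\gamma G^2 T$ budget. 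I expect this coupling/balancing argument, rather than the convexity manipulations or lemma invocations themselves, to be the main obstacle, since it is what turns a raw decomposition into the clean bound of the theorem.
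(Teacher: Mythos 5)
Your skeleton matches the paper's proof up to a point: the paper also writes $L_t(w_{t-1}) = \bar{L}_t(w_{t-1}) + g(t)$ via Lemma \ref{lemma:Jensen1}, uses convexity to get $\bar{L}_t(\bar{w}^*)\leq L_t(\bar{w}^*)$, invokes Lemma \ref{theorem:regret 1}, and splits $w_{t-1}-\bar{w}^* = (w'_t-\bar{w}^*) + \eta\bar{v}_t$ exactly as you propose, bounding the cross term by $\gamma G^2$ via Cauchy--Schwarz to obtain the second $\gamma G^2 T$. The divergence, and the gap, is in how the residual $-\tfrac{\gamma}{\eta}\hat{v}_t^T(w'_t-\bar{w}^*)$ is handled. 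You apply convexity to get $\tfrac{\gamma}{\eta}\bigl(\hat{\ell}_t(\bar{w}^*)-\hat{\ell}_t(w'_t)\bigr)$, which has no sign control: $\bar{w}^*$ is not a minimizer of the individual random-example loss $\hat{\ell}_t$, so this difference can be positive and of order $G\|w'_t-\bar{w}^*\|$, and nothing in the $\gamma G^2 T$ budget absorbs it. Worse, the Jensen-gap sum $\sum_t \tfrac{1}{2}\Gamma M_t$ is a strictly nonnegative quantity of order $\Gamma M T$ that must be cancelled by a matching \emph{negative} term; your convexity bound produces no such term, so an uncancelled $+\sum_t\tfrac{1}{2}\Gamma M_t$ survives into the final bound, which is not present in the theorem statement.

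The paper closes this by using the smoothness assumption (Assumption \ref{ass:smmoth}) at this step rather than convexity, extracting $-\tfrac{\gamma}{2\eta M}\|\nabla\hat{\ell}_t(w'_t)\|^2$ from the residual; the coupling $\gamma = O(\Gamma M_t\eta)$ is then chosen precisely so that this negative squared-gradient term cancels the Jensen-gap term $\tfrac{1}{2}\Gamma M_t$. (That cancellation is itself asserted rather than fully justified in the paper -- it implicitly needs $\|\nabla\hat{\ell}_t(w'_t)\|^2$ bounded away from zero at the right scale -- but it is at least a mechanism for cancellation.) Your proposal correctly identifies the absorption as the delicate step but then only asserts that the terms are ``of comparable order and can be folded in''; comparable order is not enough, since both the Jensen gap and your convexity residual enter with the wrong sign. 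To repair the argument you would need to replace the convexity step with the smoothness-based bound so that a negative term of the form $-\tfrac{\gamma}{2\eta M}\|\hat{v}_t\|^2$ appears and can be played off against $\tfrac{1}{2}\Gamma M_t$ via the choice of $\gamma$.
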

\begin{proof}
Starting from lemma 2 final's inequality,
    \begin{align}\nonumber
    \sum_{t=2}^T \bar{L}_t(w_{t-1})  &\underset{}{\leq}   \sum_{t=2}^T \bar{L}_t(\bar{w}^*)   + \frac{\|\bar{w}^*\|^2 }{2\eta}+  \gamma G^2T  -\sum_{t=2}^{T} \frac{\gamma}{\eta}\hat{v}_t^T(w_{t-1}-\bar{w}^*) \\\nonumber
 &\overset{(a)}{\leq}  \sum_{t=2}^T {L}_t(\bar{w}^*) + \frac{\|\bar{w}^*\|^2 }{2\eta}+  \gamma G^2T  -\sum_{t=2}^{T} \frac{\gamma}{\eta}\hat{v}_t^T(w_{t-1}-\bar{w}^*) \\\nonumber
  \sum_{t=2}^T {L}_t(w_{t-1}) -\sum_{t=2}^T {L}_t(\bar{w}^*) &\overset{(b)}{\leq} \frac{\|\bar{w}^*\|^2 }{2\eta}+  \gamma G^2T  + \sum_{t=2}^T\frac{ M_t }{2}\Gamma- \frac{\gamma}{\eta}\nabla \ell(w_t',z_t,\hat{z})_t^T(w_{t-1}-\bar{w}^*) \\\nonumber
   \overset{(c)}{=} \frac{ \| \bar{w}^*\|^2}{2\eta} &+ \gamma G^2T  + \sum_{t=2}^T\frac{ M_t }{2}\Gamma- \frac{\gamma}{\eta}\nabla \ell(w_t',z_t,\hat{z})^T(w'_{t}-\bar{w}^* + \eta \nabla \ell(w_{t-1},z_t,\bar{z}))
   \\\nonumber
     \overset{(d)}{\leq} \frac{ \| \bar{w}^*\|^2}{2\eta} &+\gamma G^2T  + \sum_{t=2}^T\frac{ M_t }{2}\Gamma- \frac{\gamma}{2\eta M}\|\nabla \ell(w_t',z_t,\hat{z})\|^2 \\\nonumber
     &-\gamma \nabla \ell(w_t',z_t,\hat{z})^T\nabla \ell(w_{t-1},z_t,\bar{z})\\
      &\overset{(e)}{\leq}  \frac{ \| \bar{w}^*\|^2}{2\eta} +2\gamma G^2T  + \sum_{t=2}^T\frac{ M_t }{2}\Gamma- \frac{\gamma}{2\eta M}\|\nabla \ell(w_t',z_t,\hat{z})\|^2 
\end{align}
where inequality (a) implements the convexity of the loss, inequality (b) implements lemma \ref{lemma:Jensen1}, equality (c) implements update $w'_t +\eta \nabla \ell(w_{t-1},z_t,\bar{z})= w_{t-1}$, inequality (d) implements the smoothness of the loss, and (e) is the cauchy inequality. Finally choosing the second step size to be $\gamma=O(\Gamma M_t \eta)$ makes the last two terms term cancel out, for any random example $\hat{z}$. Note that $\Gamma \geq tr(cov[z_t])$ where $cov[z_t]$ is the covariance of the received examples, and $M_t$ is the smoothness parameter of the loss function.
\end{proof}

\section{Proof of Theorem 3}

\begin{theorem}[Theorem 3 restated]
\label{RandFerror}
Given a pairwise Mercer kernel $k_{(x,x')} :=k((x,x'),(\cdot,\cdot))$ defined on $\mathcal{X}^2\times\mathcal{X}^2$. Let $\ell(w,z,z')$ be convex loss that is Lipschits continuous with constant $G$. Then for any $w^* = \sum_{i,j\neq i}^Ta_{i,j}^* k_{(x_i,x_j)}$, and random Fourier features number $D$ we have the following with probability at least $1-2^8\left(\frac{\sigma Diam(\mathcal{X})}{\epsilon}\right)\exp(-D\epsilon^2/(4d+8))$,
\begin{align}
    \sum_{t=2}^T L_t(\bar{w}^*)  - \sum_{t=2}^T L_t(w^*) \leq 
 G T \|w^*\|_1 \epsilon
\end{align}
where $ \|w^*\|_1 = \sum_{i,j\neq i}^T|a^*_{i,j}|$, and $D= \Omega\left(\frac{d}{\epsilon^2} \log\frac{\sigma diam(\mathcal{X})}{\epsilon} \right)$.
\end{theorem}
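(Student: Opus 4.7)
The plan is to introduce an auxiliary hypothesis $\bar{w}^{\dagger}\in\bar{\mathcal{H}}$ that mirrors the RKHS expansion of $w^{*}$ under the random-feature map, and then sandwich the excess risk by combining optimality of $\bar{w}^{*}$ in the approximated space with a uniform pointwise bound on $|\hat{k}-k|$ lifted from Corollary~\ref{RFerror}. Concretely, I would define $\bar{w}^{\dagger}:=\sum_{i,j\neq i}a^{*}_{i,j}\,r(x_{i},x_{j})$, i.e.\ keep exactly the coefficients of $w^{*}$ but replace each kernel section $k_{(x_{i},x_{j})}$ by its random-feature surrogate. Since $\bar{w}^{*}$ minimises $\sum_{t}L_{t}(\cdot)$ over $\bar{\mathcal{H}}$ by definition, optimality gives $\sum_{t=2}^{T}L_{t}(\bar{w}^{*})\leq \sum_{t=2}^{T}L_{t}(\bar{w}^{\dagger})$, so it suffices to control $\sum_{t=2}^{T}L_{t}(\bar{w}^{\dagger})-\sum_{t=2}^{T}L_{t}(w^{*})$.

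For the kernel step I would upgrade the pointwise guarantee of Corollary~\ref{RFerror} to a uniform statement over $\mathcal{X}^{2}\times\mathcal{X}^{2}$ by a standard Rahimi--Recht covering argument: take an $\epsilon/(\sigma\,Diam(\mathcal{X}))$-net of $\mathcal{X}^{2}$, use the Lipschitzness of $u\mapsto \cos(u^{T}\cdot)$ together with sub-Gaussian concentration of the gradient norm at each net point, and union-bound across the net; the decomposition of $k$ into four univariate $\mathcal{G}$-evaluations from Equation (\ref{eq:pairwiseKernel1}) lets one reduce the pairwise case to the univariate one with a harmless factor of four absorbed into constants. This produces exactly the probability $1-2^{8}(\sigma Diam(\mathcal{X})/\epsilon)\exp(-D\epsilon^{2}/(4d+8))$ appearing in the theorem, and on the resulting good event $|\hat{k}_{(x,x')}(y,y')-k_{(x,x')}(y,y')|\leq \epsilon$ holds uniformly in all four arguments.

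With that uniform bound in hand, let $f_{w}(z,z'):=\langle w, k_{(z,z')}\rangle$ in $\mathcal{H}$ and its analogue $f_{\bar{w}}(z,z'):=\langle \bar{w}, r(z,z')\rangle$ in $\bar{\mathcal{H}}$. The triangle inequality applied to the expansion of $\bar{w}^{\dagger}$ yields, for every pair $(z_{t},z_{i})$,
\begin{align*}
|f_{\bar{w}^{\dagger}}(z_{t},z_{i})-f_{w^{*}}(z_{t},z_{i})|
&\leq \sum_{i',j'}|a^{*}_{i',j'}|\cdot|\hat{k}_{(x_{i'},x_{j'})}(z_{t},z_{i})-k_{(x_{i'},x_{j'})}(z_{t},z_{i})|\\
&\leq \|w^{*}\|_{1}\,\epsilon .
\end{align*}
The Lipschitz assumption on $\ell$, read at the prediction level (which is legitimate since $\ell$ depends on its hypothesis argument only through $f_{w}$), then gives $|\ell(\bar{w}^{\dagger},z_{t},z_{i})-\ell(w^{*},z_{t},z_{i})|\leq G\|w^{*}\|_{1}\epsilon$. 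Averaging over $i\in\{1,\dots,t-1\}$ in the definition of $L_{t}$ preserves the bound, and summing $t=2,\dots,T$ yields the claimed $GT\|w^{*}\|_{1}\epsilon$.

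The main obstacle, as I see it, is the uniform-approximation step: Rahimi--Recht's covering argument has to be lifted from the univariate kernel $\mathcal{G}$ on $\mathcal{X}$ to the pairwise kernel $k$ on $\mathcal{X}^{2}\times\mathcal{X}^{2}$ without paying for the doubled ambient dimension. The additive decomposition in Equation (\ref{eq:pairwiseKernel1}) is the right tool: uniform control of $|\hat{\mathcal{G}}-\mathcal{G}|$ at level $\epsilon/4$ on $\mathcal{X}\times\mathcal{X}$ implies uniform control of $|\hat{k}-k|$ at level $\epsilon$ on $\mathcal{X}^{2}\times\mathcal{X}^{2}$, which is why the exponent in the probability retains $d$ rather than $2d$. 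A secondary subtlety is the Lipschitz transfer from the RKHS-norm formulation of Assumption~\ref{ass:Lipschitz continuous} to predictions, but this is routine once one bounds $\|k_{(z,z')}\|_{\mathcal{H}}$ via Assumption~\ref{ass:Kernel}.
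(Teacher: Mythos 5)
Your proof is correct and follows the same overall route as the paper's: expand $L_t$, apply the triangle inequality and the Lipschitz continuity of $\ell$, and control the residual by the random-feature kernel-approximation error, arriving at $GT\|w^*\|_1\epsilon$. The two places where you deviate are in fact places where your version repairs looseness in the paper's own argument. First, the paper bounds $\|\bar{w}^*-w^*\|_{L^2(\rho)}$ by asserting ``without loss of generality'' that the minimizer $\bar{w}^*$ over $\bar{\mathcal{H}}$ shares the expansion coefficients of $w^*$ with each kernel section replaced by its random-feature surrogate; that identification is not justified as stated. Your auxiliary hypothesis $\bar{w}^{\dagger}$ together with the optimality inequality $\sum_{t}L_t(\bar{w}^*)\leq\sum_{t}L_t(\bar{w}^{\dagger})$ is the correct way to make that step rigorous, and it yields the same final bound. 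Second, Corollary \ref{RFerror} is stated pointwise with probability $1-8\exp(-D^2\epsilon/2)$, whereas the probability claimed in the theorem, $1-2^{8}\left(\frac{\sigma Diam(\mathcal{X})}{\epsilon}\right)\exp(-D\epsilon^2/(4d+8))$, is the uniform Rahimi--Recht guarantee; your explicit covering-net argument, with the reduction of the pairwise kernel to four univariate evaluations via Equation (\ref{eq:pairwiseKernel1}) so that the exponent retains $d$ rather than $2d$, supplies exactly the uniformity the paper implicitly invokes when it places a supremum over $x_i,x_t$ inside the sum. Your prediction-level reading of Assumption \ref{ass:Lipschitz continuous} is also the cleaner way to compare $\bar{w}^{\dagger}\in\bar{\mathcal{H}}$ with $w^*\in\mathcal{H}$, since the two live in different spaces and their difference is only meaningful through evaluations on the data (the paper instead embeds both in $L^2(\rho)$, which achieves the same end).
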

The proof depends on the main claim of \citep{rahimi2007random}, we rewrite the results here.
\begin{lemma}[Claim 1 \citep{rahimi2007random}]
Given $x,x'\in \mathcal{X}$, and kernel $\mathcal{G}$ defined on $\mathcal{X\times X}$, we have the following bound on the error of the approximation $\hat{\mathcal{G}}$ using $D$ random Fourier features.
\begin{align}\label{lemma:Rahimi}
\mathbb{P}\left(    |\hat{\mathcal{G}}(x,x')   - \mathcal{G}(x,x')  |\geq \epsilon  \right) \leq 2\exp\left(\frac{-D\epsilon^2}{2}\right)
\end{align}
\end{lemma}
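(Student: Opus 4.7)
The plan is to prove this pointwise concentration bound by a direct application of Hoeffding's inequality, after reducing the random-Fourier-feature estimator to a sample mean of bounded i.i.d.\ random variables. The entire argument is structural: (i) use Bochner's theorem to show the estimator is unbiased, (ii) rewrite it as an empirical average of $D/2$ bounded terms, and (iii) invoke Hoeffding.

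The first step is to recall Bochner's theorem: any continuous, shift-invariant, positive-definite kernel $\mathcal{G}$ on $\mathbb{R}^d$ admits the integral representation $\mathcal{G}(x-x') = \int_{\mathbb{R}^d} e^{iu^\top(x-x')} p(u)\, du$ for a probability density $p(u)$ (the one used to draw the Fourier frequencies in the algorithm). Since $\mathcal{G}$ is real, taking real parts yields $\mathcal{G}(x-x') = \mathbb{E}_{u\sim p}[\cos(u^\top(x-x'))]$, i.e.\ each cosine term is an unbiased estimate of the kernel value.

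Next I would rewrite the feature-map estimator $\hat{\mathcal{G}}(x,x') = \langle r(x), r(x')\rangle$ using the elementary identity $\sin(a)\sin(b)+\cos(a)\cos(b)=\cos(a-b)$, together with the explicit form $r(x) = \sqrt{2/D}\,[\sin(u_i^\top x),\cos(u_i^\top x)]_{i=1}^{D/2}$ from Algorithm 1. This collapses the inner product to $\hat{\mathcal{G}}(x,x') = \tfrac{1}{D/2}\sum_{i=1}^{D/2} Z_i$, where $Z_i := \cos(u_i^\top(x-x'))$. By the previous step $\mathbb{E}[Z_i] = \mathcal{G}(x,x')$, and since $u_1,\dots,u_{D/2}$ are drawn i.i.d.\ from $p$, the $Z_i$ are i.i.d.\ random variables taking values in $[-1,1]$.

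At this point the lemma becomes a textbook concentration result: the tail probability for a sample mean of $D/2$ i.i.d.\ random variables of range $2$ is, by Hoeffding's inequality, at most $2\exp(-c D\epsilon^2)$ for an explicit constant $c$, which matches the stated right-hand side up to that constant. There is no genuine obstacle in the argument — the content is entirely in Bochner's theorem guaranteeing unbiasedness and in the boundedness $|Z_i|\leq 1$ that enables Hoeffding — so the main thing to watch is bookkeeping of the normalization ($D$ features versus $D/2$ frequency samples) to land precisely on the exponent $D\epsilon^2/2$ claimed in the inequality.
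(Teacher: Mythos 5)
The paper never proves this lemma: it is imported verbatim as Claim~1 of Rahimi and Recht (2007) (``we rewrite the results here''), so there is no internal proof to compare against. Your Bochner-plus-Hoeffding argument is precisely the standard proof of that cited claim, and it is sound: Bochner's theorem gives unbiasedness of each cosine term, the identity $\sin a\sin b+\cos a\cos b=\cos(a-b)$ collapses $\langle r(x),r(x')\rangle$ to an empirical mean of i.i.d.\ terms $\cos\bigl(u_i^\top(x-x')\bigr)\in[-1,1]$, and Hoeffding's inequality finishes. The one point you should pin down rather than wave at is the constant in the exponent: under the paper's own convention (Algorithm~1 samples $D/2$ frequencies so that the feature map is $D$-dimensional), your mean has $D/2$ terms of range $2$, and Hoeffding yields $2\exp(-D\epsilon^2/4)$, not the stated $2\exp(-D\epsilon^2/2)$. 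The stated exponent corresponds to counting $D$ as the number of sampled frequencies (i.e.\ a $2D$-dimensional sine/cosine map), which is the convention used in Rahimi and Recht's proof. So your argument establishes the lemma exactly under that convention, and only up to a factor of $2$ in the exponent under the paper's; this is a bookkeeping mismatch inherited from the paper's loose restatement (its Corollary~1 even garbles the exponent further, to $-D^2\epsilon/2$), not a structural gap in your proof.
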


The pairwise kernel defined as,
\begin{align}\label{eq:pairwiseKernel}
 &   k(x_1,x_2,x_1',x_2') = \mathcal{G}(x_1,x_1') + \mathcal{G}(x_2,x_2') \\\nonumber
 &    - \mathcal{G}(x_1,x_2') - \mathcal{G}(x_2,x_1') = \langle \mathcal{G}_{x_1} - \mathcal{G}_{x_2}, \mathcal{G}_{x_1'} - \mathcal{G}_{x_2'} \rangle_\mathcal{G}
\end{align}
The following Corollary bounds the error of the pairwise kernel defined in equation (\ref{eq:pairwiseKernel}) using main theorem in \citep{rahimi2007random}.
\begin{corollary}\label{RFerror1}
Given $x_1,x_2,x_1',x_2'
\in\mathcal{X}$, and pairwise kernel $k$ defined on $\mathcal{X}^2\times\mathcal{X}^2$, the random Fourier estimation of the kernel has an error bounded with probability at least $1-8\exp{\frac{-D^2\epsilon}{2}}$ as follow,
\begin{align}\nonumber
   \sup_{x_i,x_t\in\mathcal{X}}   |\hat{k}_{(x_1,x_2)}(x_1',x_2')   - {k}_{(x_1,x_2)}(x_1',x_2')  | \leq \epsilon  \end{align}
\end{corollary}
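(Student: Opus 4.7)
The plan is to reduce the pairwise-kernel approximation error to four pointwise-kernel approximation errors by exploiting the bilinear decomposition
\[
k(x_1,x_2,x_1',x_2') = \mathcal{G}(x_1,x_1') + \mathcal{G}(x_2,x_2') - \mathcal{G}(x_1,x_2') - \mathcal{G}(x_2,x_1')
\]
of the pairwise kernel already stated in the excerpt. Because the random-Fourier approximation $\hat{k}$ is built from the very same feature map $r(\cdot)$ applied to each argument, the estimate inherits the identical decomposition, namely $\hat{k}(x_1,x_2,x_1',x_2') = \hat{\mathcal{G}}(x_1,x_1') + \hat{\mathcal{G}}(x_2,x_2') - \hat{\mathcal{G}}(x_1,x_2') - \hat{\mathcal{G}}(x_2,x_1')$. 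Subtracting these two identities term by term and applying the triangle inequality gives
\[
|\hat{k}_{(x_1,x_2)}(x_1',x_2') - k_{(x_1,x_2)}(x_1',x_2')| \leq \sum_{(a,b)\in\{1,2\}^2} |\hat{\mathcal{G}}(x_a, x_b') - \mathcal{G}(x_a, x_b')|.
\]

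It then suffices to force each of the four pointwise errors on the right-hand side to be at most $\epsilon/4$ in order to guarantee that the left-hand side is at most $\epsilon$. I would invoke the cited pointwise Rahimi--Recht bound (Claim 1, reproduced as Lemma \ref{lemma:Rahimi} in the excerpt) on each of the four pointwise terms separately; each exceeds $\epsilon/4$ with probability at most $2\exp(-D(\epsilon/4)^2/2)$, and a union bound over the four terms delivers an overall failure probability of $8\exp(-D\epsilon^2/32)$, which produces the $8$-prefactor appearing in the corollary.

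The main obstacle is matching the exact probability expression written in the corollary, namely $1-8\exp(-D^2\epsilon/2)$: this does not agree with what the union bound naturally yields, $1-8\exp(-D\epsilon^2/32)$, and the stated exponent appears to have $D$ and $\epsilon$ transposed or a constant absorbed (most likely a typographical slip in the exponent). A secondary subtlety is whether the supremum over $x_i,x_t$ is meant as a uniform bound over $\mathcal{X}$; if taken literally, one must replace the pointwise Rahimi--Recht inequality with its uniform covering-net variant, which yields a prefactor of the form $(\sigma\,\text{Diam}(\mathcal{X})/\epsilon)^{2d}$ as in the statement of Theorem 3, and then apply the same four-term triangle inequality and union bound on top of that uniform estimate.
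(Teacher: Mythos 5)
Your proposal is correct and follows exactly the route the paper intends: the paper states this corollary without an explicit proof, immediately after recalling the four-term decomposition of the pairwise kernel and the Rahimi--Recht pointwise claim, so the intended argument is precisely your triangle-inequality-plus-union-bound over the four pointwise errors (hence the prefactor $8 = 4\times 2$). Your diagnosis of the stated failure probability is also right: $8\exp(-D^2\epsilon/2)$ cannot arise from this argument --- the natural outcome is $8\exp(-D\epsilon^2/32)$ for fixed points, or the covering-net form with the $\left(\sigma\,\mathrm{Diam}(\mathcal{X})/\epsilon\right)$ prefactor (as in Theorem 3) if the supremum over $\mathcal{X}$ is taken literally --- so the exponent in the corollary as printed is a typographical error.
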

Now we are ready to prove theorem 3, where we start by difference of the loss between the model $\bar{w}^*$ and $w^*$,
\begin{proof}
\begin{align}\nonumber
 \sum_{t=2}^T L_t(\bar{w}^*)  - \sum_{t=2}^T L_t(w^*)&=  \sum_{t=2}^T\frac{1}{t-1}\sum_{i=1}^{t-1}\ell(\bar{w}^*,z_t,z_i) -\sum_{t=2}^T\frac{1}{t-1}\sum_{i=1}^{t-1}\ell({w}^*,z_t,z_i)\\\nonumber
 &\overset{a}{\leq}\sum_{t=2}^T\frac{1}{t-1}\sum_{i=1}^{t-1}|\ell(\bar{w}^*,z_t,z_i) -\ell({w}^*,z_t,z_i)|\\
 &\overset{b}{\leq} \sum_{t=2}^T\frac{1}{t-1}\sum_{i=1}^{t-1} G \|\bar{w}^* - {w}^* \|_{L^2(\rho)}
 \end{align}
 where inequality (a) implements the triangle inequality, and inequality (b) applies assumption 1 and $\|\cdot\|_{L^2(\rho)}\geq \|\cdot\|_2$. 
 \\
 Using the Representer theorem and assumption \ref{ass:Kernel}, we assume that the space $\mathcal{H}$ and $\bar{\mathcal{H}}$ are dense in $L^2(\rho)$ (the space of square integrable function under the probability distribution of the received examples), then we can approximate any function in $\mathcal{H}$ by a function in $L^2(\rho)$, i.e. without loss of generality we assume that $\bar{w}^* = \sum_{j=1,k\neq j}^{t-1} \alpha_{j,k}^* \hat{k}_{z_t,z_i}$, then we have,
 \begin{equation}
     \|\bar{w}^* - {w}^* \|_{L^2(\rho)} =  \|\sum_{j=1,k\neq j}^{t-1}\alpha_{j,k}^* (\hat{k}_{z_t,z_i} - {k}_{z_t,z_i} ) \|_{L^2(\rho)}
 \end{equation}
 and using the triangle inequality we have,
 \begin{align}\nonumber
  \sum_{t=2}^T L_t(\bar{w}^*)  - \sum_{t=2}^T L_t(w^*) &{\leq}\sum_{t=2}^T \sup_{x_i,x_t\in\mathcal{X}} G\sum_{j=1,k\neq j}^{t-1} \|\alpha_{j,k}^* (\hat{k}_{z_t,z_i} - {k}_{z_t,z_i} )\|_{L^2(\rho)}\\
 &\overset{c}{\leq}\sum_{t=2}^T G\sum_{j=1,k\neq j}^{t-1} |\alpha_{j,k}^*| \epsilon \leq 
 G T \|w^*\|_1 \epsilon
 \end{align}
where inequality (c) implements corollary \ref{RFerror1}, and $ \|w^*\|_1 = \sum_{i,j\neq i}^T|a^*_{i,j}|$. \\
This completes the proof.
\end{proof}

\end{document}